\documentclass[letterpaper]{article} 
\usepackage{aaai2026}  
\usepackage{times}  
\usepackage{helvet}  
\usepackage{courier}  
\usepackage[hyphens]{url}  
\usepackage{graphicx} 
\urlstyle{rm} 
\usepackage{natbib}  
\usepackage{caption} 
\frenchspacing  
\setlength{\pdfpagewidth}{8.5in}  
\setlength{\pdfpageheight}{11in}  
%
\usepackage{algorithm}
\usepackage{algorithmic}
\usepackage{algorithmic}

\usepackage{adjustbox}
\usepackage{multirow}
\usepackage{booktabs}
\usepackage[table]{xcolor}
\usepackage{amsmath}
\usepackage{cleveref}

%
\usepackage{newfloat}
\usepackage{listings}
\DeclareCaptionStyle{ruled}{labelfont=normalfont,labelsep=colon,strut=off} 
\lstset{%
	basicstyle={\footnotesize\ttfamily},
	numbers=left,numberstyle=\footnotesize,xleftmargin=2em,
	aboveskip=0pt,belowskip=0pt,%
	showstringspaces=false,tabsize=2,breaklines=true}
\floatstyle{ruled}
\newfloat{listing}{tb}{lst}{}
\floatname{listing}{Listing}
%
\pdfinfo{
/TemplateVersion (2026.1)
}

\usepackage{amsfonts}
\usepackage{subcaption}
\usepackage{makecell}
\usepackage{amsthm}
\newtheorem{assumption}{Assumption}
\newtheorem{theorem}{Theorem}

\setcounter{secnumdepth}{0} 

%


\title{FedALT: Federated Fine-Tuning through Adaptive Local Training with Rest-of-World LoRA}
\author{
    Jieming Bian\textsuperscript{\rm 1}\equalcontrib \quad
    Lei Wang\textsuperscript{\rm 1}\equalcontrib \quad
    Letian Zhang\textsuperscript{\rm 2} \quad
    Jie Xu\textsuperscript{\rm 1}
}
\affiliations{
    \textsuperscript{\rm 1}University of Florida, Gainesville, FL, USA\\
    \textsuperscript{\rm 2}Middle Tennessee State University, Murfreesboro, TN, USA\\
    {jieming.bian@ufl.edu, leiwang1@ufl.edu, letian.zhang@mtsu.edu, jie.xu@ufl.edu}
}

\usepackage{bibentry}

\begin{document}

\maketitle

\begin{abstract}
Fine-tuning large language models (LLMs) in federated settings enables privacy-preserving adaptation but suffers from cross-client interference due to model aggregation. Existing federated LoRA fine-tuning methods, primarily based on FedAvg, struggle with data heterogeneity, leading to harmful cross-client interference and suboptimal personalization. In this work, we propose \textbf{FedALT}, a novel personalized federated LoRA fine-tuning algorithm that fundamentally departs from FedAvg. Instead of using an aggregated model to initialize local training, each client continues training its individual LoRA while incorporating shared knowledge through a separate Rest-of-World (RoW) LoRA component. To effectively balance local adaptation and global information, FedALT introduces an adaptive mixer that dynamically learns input-specific weightings between the individual and RoW LoRA components, drawing conceptual foundations from the Mixture-of-Experts (MoE) paradigm. Through extensive experiments on NLP benchmarks, we demonstrate that FedALT significantly outperforms state-of-the-art personalized federated LoRA fine-tuning methods, achieving superior local adaptation without sacrificing computational efficiency.
\end{abstract}


\section{Introduction}

\begin{figure*}[t]
  \centering
    \includegraphics[width=0.8\linewidth]{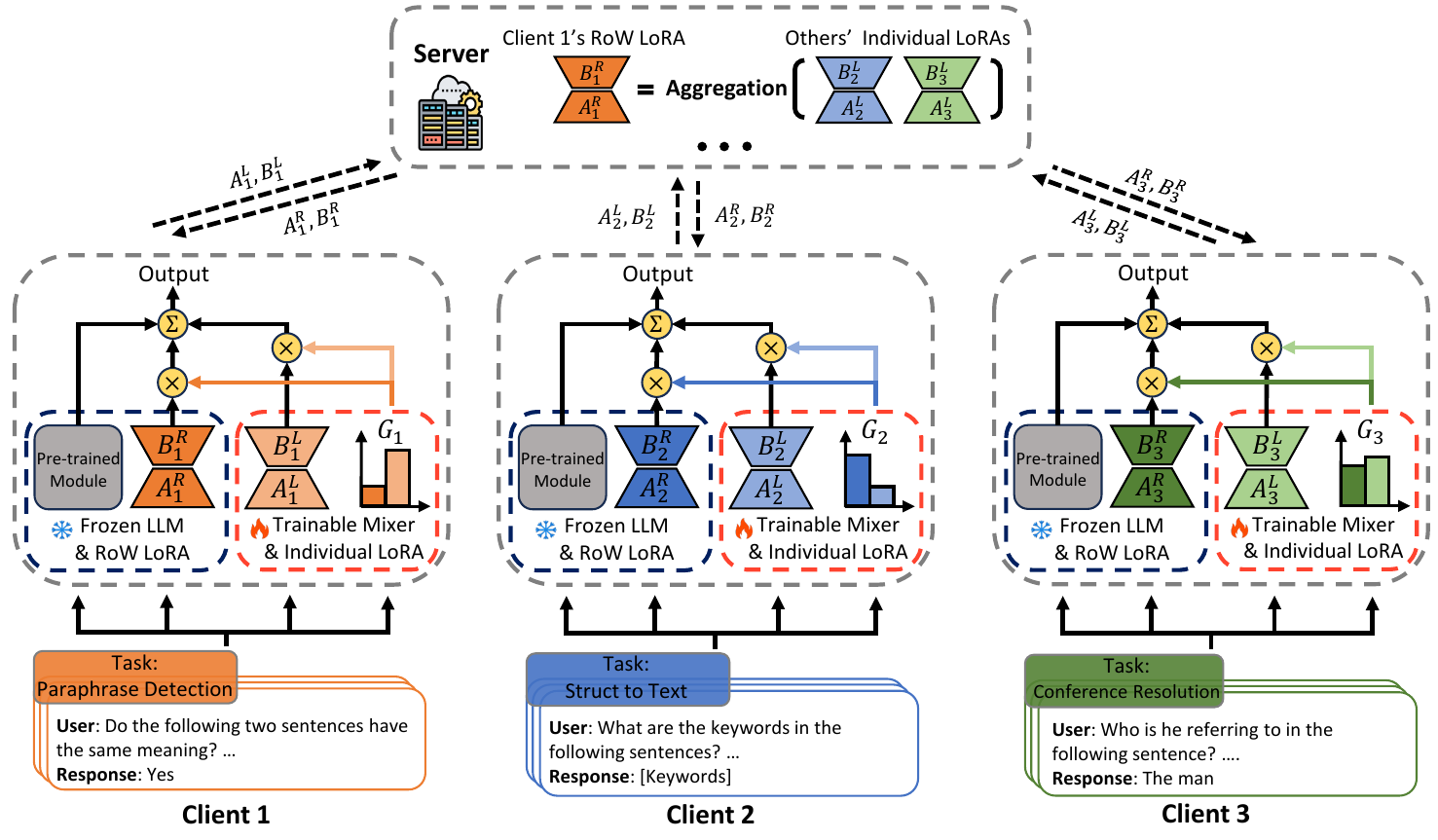}
  \caption{\textbf{Illustration of FedALT.} Instead of directly aggregating local LoRA modules from each client using FedAvg, FedALT introduces a frozen RoW LoRA component to transmit shared global knowledge while preserving client-specific adaptations through Individual LoRA. The adaptive mixer dynamically combines the RoW LoRA and Individual LoRA.}
  \label{fig:illustration}
\end{figure*}
Large language models (LLMs) \cite{kenton2019bert, brown2020language, raffel2020exploring, touvron2023llama, zhou2024comprehensive, zeng2022glm, touvron2023llama2} have demonstrated exceptional capabilities in language understanding and generation, enabling a wide range of applications. Adapting pretrained LLMs to specialized domains or further enhancing their performance requires fine-tuning with task-specific data \cite{minaee2024large}. However, full fine-tuning, which involves updating all model parameters, is computationally expensive and often impractical for real-world deployment. To address this limitation, several parameter-efficient fine-tuning (PEFT) \cite{ding2023parameter, fu2023effectiveness, liu2022few, li2021prefix, lester2021power} methods have been proposed, with low-rank adaptation (LoRA) \cite{hu2021lora} being one of the most widely used. LoRA reduces the number of trainable parameters by integrating low-rank matrices into the model, significantly reducing computational costs.

\textcolor{black}{Fine-tuning LLMs necessitates substantial data volumes for adaptation to downstream tasks; however, this requisite data often resides across multiple institutions where privacy regulations and security protocols prohibit direct data sharing. Federated Learning (FL) facilitates collaborative fine-tuning without raw data exposure, establishing itself as an essential paradigm for privacy-preserving adaptation \cite{bian2025survey}. Nonetheless, existing federated LoRA fine-tuning methodologies \cite{zhang2024towards, sun2024improving, wang2024flora, Bian_2025_ICCV} predominantly presuppose a singular global model, neglecting the inherent data heterogeneity among participating clients. Such a homogeneous modeling approach frequently yields suboptimal performance and introduces fairness disparities, particularly in real-world deployments where client data exhibits significant variance in volume, domain, and distributional characteristics. While limited research on personalized federated LoRA fine-tuning \cite{hao2025personalized, qi2024fdlora} exists, these approaches primarily address label heterogeneity cases. However, in realistic LLM fine-tuning contexts, clients' learning objectives naturally diverge due to diverse business imperatives \cite{yao2022benchmark}. Personalized federated LoRA fine-tuning addressing more realistic task-heterogeneous scenarios \cite{yang2024dual} remains insufficiently investigated in the current literature.}

\textcolor{black}{Federated Averaging (FedAvg) \cite{mcmahan2017communication} is perhaps the most widely used FL framework, serving as the foundation for numerous follow-up works that introduce various optimizations. Its core principle is to aggregate locally trained client models into a global model, which then initializes the next round of local training. Most personalized FL algorithms \cite{yang2024dual, qi2024fdlora, tan2022towards, deng2020adaptive, collins2021exploiting, fallah2020personalized}, despite their implementation differences, build upon the FedAvg framework by incorporating personalization mechanisms—and personalized federated LoRA fine-tuning follows the same paradigm. For example, the recently proposed FedDPA \cite{yang2024dual} algorithm personalizes LoRA by introducing both global and local LoRA components. The global LoRA component is trained using FedAvg, while the local LoRA component is trained independently on each client's private data. This local adaptation can occur either after the global LoRA stabilizes or alternatedly with global LoRA training, in which case the final output is computed as a weighted combination of both components using a pre-defined weighting factor. However, our motivational study reveals that FedAvg-based approaches may degrade client-specific performance due to \textcolor{black}{harmful cross-client interference}. We identify two fundamental limitations of FedAvg-based methods like FedDPA in heterogeneous FL settings. \textbf{First}, they suffer from harmful cross-client interference due to reliance on global aggregation, which can negate individual client improvements achieved during local fine-tuning. \textbf{Second}, these methods lack an effective mechanism for balancing global and local information, relying on fixed or predefined weighting factors, which can be suboptimal for clients poorly represented by the global model. }

\textcolor{black}{To address these critical limitations, we propose a novel personalized federated LoRA fine-tuning approach called FedALT (short for Federated Fine-tuning through \textbf{A}daptive \textbf{L}ocal \textbf{T}raining with Rest-of-World LoRA.). Distinctly different from FedAvg-based methods, FedALT does not aggregate local models to initialize training rounds. Instead, each client continues training on it own previously trained local models, supplemented by a frozen \textbf{Rest-of-World (RoW) LoRA} component. The RoW LoRA captures global knowledge without updating during local training rounds, thus avoiding interference with client-specific adaptations. To leverage RoW LoRA effectively, we introduce an \textbf{adaptive mixer}, inspired by Mixture-of-Experts (MoE) \cite{jordan1994hierarchical}, which dynamically and optimally combines global and local information for each client's data sample. Our contributions are as follows:
\begin{itemize}
\item We highlight that personalized federated fine-tuning for clients with \textbf{heterogeneous tasks} has been insufficiently explored and identify critical limitations in existing methods that build upon the FedAvg paradigm.
\item We propose FedALT, a novel personalization framework featuring a frozen RoW LoRA and an adaptive mixer, effectively addressing these limitations by dynamically balancing global and local model adaptations.
\item We perform comprehensive experiments on two LLMs (Bloom \cite{le2023bloom} and Llama 2 \cite{touvron2023llama2}) using the Flan benchmark \cite{chung2024scaling}, demonstrating that FedALT significantly outperforms existing federated LoRA fine-tuning methods.
\end{itemize}
}

\section{Related Work}
\subsection{Parameter-Efficient Fine-Tuning}
As the size of LLMs continues to grow, full parameter fine-tuning has become increasingly computationally and storage-intensive \cite{hadi2023survey}. To address this issue, Parameter-Efficient Fine-Tuning (PEFT) methods \cite{ding2023parameter, fu2023effectiveness, han2024parameter, liu2022few, li2021prefix, lester2021power} have been developed to significantly reduce the number of trainable parameters. PEFT techniques introduce a limited set of additional trainable parameters to enhance model performance while keeping the majority of pre-trained parameters frozen. Among all existing PEFT methods, the most prominent approach is LoRA \cite{hu2021lora}, which employs low-rank matrices to approximate the pre-trained weight matrix, updating only the low-rank components. LoRA has became a standard method for adapting LLMs like Llama under limited computational resources \cite{lermen2023lora}. Several works have been proposed to improve LoRA \cite{tian2024hydralora, lermen2023lora, wang2023multilora, sheng2023s, liu2023moelora}. A recent development, Hydra-LoRA \cite{tian2024hydralora}, employs an asymmetric structure with a shared matrix for all samples and distinct matrices for each intrinsic component, thereby enhancing domain adaptation. While these advancements primarily focus on centralized learning settings, our work extends the application of PEFT methods to federated LLM fine-tuning, addressing the unique challenges posed by distributed and heterogeneous client data.

\subsection{Federated Learning}
The foundational work, FedAvg \cite{mcmahan2017communication}, addresses privacy and communication efficiency by aggregating local model to train a shared global model. Since then, numerous studies \cite{liuimproving, wang2024aggregation, yang2024fedfed, liu2025fedadamwcommunicationefficientoptimizerconvergence, bian2024accelerating, zhang2025fedel, 10.1145/3746027.3755226, liu2024fedbcgd} have focused on tackling various challenges within FL settings. One of the key challenges is data heterogeneity, which makes it difficult to train a single shared global model that performs well across all clients. To address this, Personalized Federated Learning (PFL) \cite{tan2022towards, deng2020adaptive, collins2021exploiting, fallah2020personalized} has emerged as an approach to adapt the global model to the specific needs of each client. Most PFL research has focused on addressing statistical differences in data distributions (e.g., label distribution skew) and has been applied primarily to smaller, standard models. In contrast, our work addresses the unique challenges posed by heterogeneous clients with diverse tasks by leveraging LLM fine-tuning within the FL framework.

\subsection{Federated Fine-Tuning}
Several studies \cite{cho2024heterogeneous, wang2024flora, kuang2024federatedscope, wu2024fedbiot} have explored federated fine-tuning approaches. \cite{kuang2024federatedscope} proposed federated full parameters fine-tuning, while \cite{sun2022conquering} introduced federated fine-tuning with PEFT using prefix-tuning. \cite{zhang2024towards} was the first study to apply LoRA in a federated context. \cite{bai2024federated, cho2024heterogeneous} focused on federated fine-tuning where clients have different LoRA ranks, while \cite{wang2024flora} proposed strategies for LoRA initialization to achieve better performance. Another studies \cite{Bian_2025_ICCV, sun2024improving} addressed server aggregation bias in LoRA-based federated fine-tuning.

The most relevant works to ours focus on personalized federated fine-tuning with LoRA \cite{yang2024dual, qi2024fdlora}. FedDPA \cite{yang2024dual} introduced global and local LoRA components but sufferd from interference during global LoRA training and struggled to balance the two phases effectively. PF2LoRA \cite{hao2025personalized} employed a bilevel framework to combine a shared adapter with client-specific adapters but faced interference issues during the aggregation of the shared adapter. FDLoRA \cite{qi2024fdlora} utilized personalized LoRA modules to initialize federated training, combining them with a global LoRA module via adaptive fusion; however, it relied on server-side datasets and was prone to interference during global LoRA aggregation, limiting its effectiveness in heterogeneous settings.

\section{Preliminaries and Motivations}
\subsection{Low-Rank Adaptation (LoRA)}
Consider a pre-trained model with parameters $\mathbf{W}_0 \in \mathbb{R}^{l \times d}$, where $\mathbf{W}_0$ represents the fixed parameters of the model, and $\mathbf{\Delta W} \in \mathbb{R}^{l \times d}$ denotes the trainable update matrix applied during fine-tuning. Here, $d$ is the input dimension and $l$ is the output dimension. Instead of updating all elements in $\mathbf{\Delta W}$, LoRA decomposes $\mathbf{\Delta W}$ into two low-rank matrices $\mathbf{A} \in \mathbb{R}^{r \times d}$ and $\mathbf{B} \in \mathbb{R}^{l \times r}$, where $r \ll \min(d, l)$. This decomposition allows the fine-tuning process to focus on the significantly smaller low-rank matrices $\mathbf{A}$ and $\mathbf{B}$ instead of the full matrix $\mathbf{\Delta W}$. Consequently, the total number of trainable parameters is reduced from $d \times l$ to $r \times (d + l)$. The model parameters after fine-tuning are given by:
\begin{equation}
\mathbf{W} = \mathbf{W}_0 + \mathbf{\Delta W} = \mathbf{W}_0 + \mathbf{B} \mathbf{A},
\end{equation}
where $\mathbf{A}$ is typically initialized with random Gaussian values, while $\mathbf{B}$ is initialized to zero.


\subsection{Heterogeneous Federated Fine-Tuning}
Assume there are $K$ institutions (clients), each possessing a distinct local training dataset $\mathcal{D}_k = \{X_k, Y_k\}$, where $k$ indexes a client. In the context of LLM fine-tuning, it is common for clients to have datasets corresponding to entirely different tasks (e.g., one client may have data for text summarization, while another client has data for sentiment analysis), rather than simply exhibiting statistical differences as in conventional FL. 

In this scenario, each client $k$ aims to leverage FL to extract useful information while fine-tuning a model tailored to its specific task. The goal is for each client to learn an individualized fine-tuned model $\textbf{W}_k$ that best fits its local dataset. The objective can be formulated as:
\begin{equation}
    \min_{\mathcal{W}} \frac{1}{K} \sum_{k=1}^K L_k(X_k, Y_k, \mathbf{W}_k),
\end{equation}
where $L_k(\cdot)$ represents the loss function for client $k$, and $\mathcal{W}$ denotes the set of personalized models $\{\mathbf{W}_k\}_{k=1}^K$.

\subsection{Motivational Study}

Federated fine-tuning operates in highly \textcolor{black}{heterogeneous environments} where client tasks can differ significantly. To understand the effects of such heterogeneity, we conducted a motivational study that highlights both the benefits and challenges of applying FL in these settings.

\noindent\textbf{Fact 1:} \textit{FL Can Transfer Useful Knowledge but Also Introduces Harmful Interference.}

In FL settings, the data collected by each client can be highly diverse \cite{huang2022learn, mendieta2022local, wang2024taming}. We conducted a motivational experiment using the Flan dataset \cite{chung2024scaling}, selecting 8 tasks and creating 8 clients, each specializing in a single task. To demonstrate potential interference in federated fine-tuning, we compared two baseline methods. The first method, \textit{FedIT} \cite{zhang2024towards}, combines FedAvg with LoRA fine-tuning. The second method is \textit{Local Only} LoRA fine-tuning, where each client fine-tunes its LoRA locally without any communication or information sharing between clients. 
The results, shown in \cref{fig:motivation}, support our first observation. On one hand, federated fine-tuning using FedIT achieves better performance than local fine-tuning on tasks Coreference Resolution and Paraphrase Detection, confirming that FL can enhance learning by enabling clients to benefit from shared knowledge. On the other hand, for tasks Commonsense Reasoning and Text Classification, FedIT underperforms compared to individual local fine-tuning. This dual outcome underscores both the promise and the pitfalls of federated fine-tuning in heterogeneous settings.

\begin{figure}
  \centering
    \includegraphics[width=0.85\linewidth]{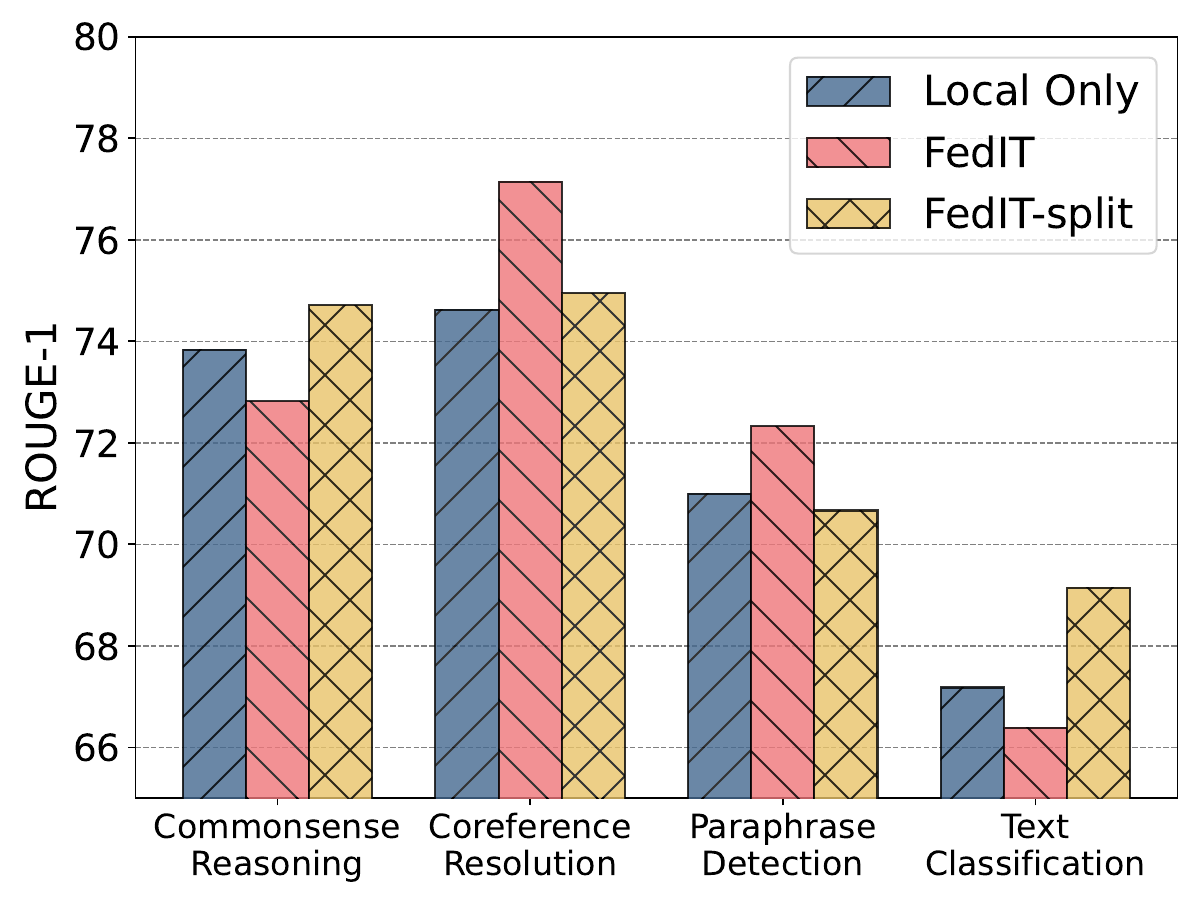}
  \caption{\textbf{Motivational study results}. }
  \label{fig:motivation}
\end{figure}

\noindent\textbf{Fact 2:} \textit{FedAvg Paradigm Struggles to Mitigate Interference}

Having established the existence of harmful interference, we next explored potential mitigation strategies. In centralized LoRA fine-tuning, HydraLoRA \cite{tian2024hydralora} found that splitting a single large LoRA into multiple smaller LoRAs reduces cross-domain interference. However, FL introduces additional challenges, as data domains are distributed across clients. To test whether multiple smaller LoRAs could mitigate interference in FL, we implemented FedAvg with multiple parallel LoRAs (i.e. FedIT-split). However, as shown in \cref{fig:motivation}, its performance remained comparable to FedIT (FedAvg with a single large LoRA), indicating that interference persisted. This result is expected: in centralized settings, interference occurs within a shared dataset and can be mitigated by partitioning LoRAs. In FL, interference arises primarily during server aggregation, and simply introducing multiple LoRAs does not prevent the aggregated model from disrupting local adaptations. Consequently, direct adaptations of centralized interference-mitigation strategies fail to address the core issue in FL.

\section{Proposed Method: FedALT}
\label{sec: proposed_method}
Our motivational studies reveal that harmful interference in FL arises from server aggregation and the use of the aggregated model to initialize subsequent training rounds. In the FedAvg paradigm, a global LoRA serves as a shared initialization point, allowing clients to fine-tune locally on their respective datasets. After fine-tuning, the updated LoRAs are averaged on the server and redistributed to clients as the new initialization. However, when client data differ significantly, aggregation disrupts individual adaptations, often negating improvements from local training and hindering personalization. Can cross-client interference be mitigated? The simplest and most direct solution is local fine-tuning using only a client’s own dataset. This approach eliminates interference entirely but fails to leverage shared knowledge from other clients. To address this trade-off, we propose a novel personalized federated LoRA fine-tuning algorithm that diverges from the FedAvg paradigm, prioritizing local training while selectively integrating useful global information.  

Our method introduces two key innovations:  1. Each client maintains two LoRA modules: an \textbf{Individual LoRA}, which captures locally learned information and is updated exclusively during the client's fine-tuning, and a \textbf{Rest-of-World (RoW) LoRA}, which aggregates information from all other clients. Crucially, the RoW LoRA remains fixed during local fine-tuning, ensuring that aggregation does not interfere with a client's adaptation.  2. An \textbf{adaptive mixer} dynamically learns the optimal \textbf{input-specific} weighting between the Individual LoRA and RoW LoRA components to balance personalization and global knowledge integration. In the following of this section, we detail the design and functionality of these components. The theoretical convergence analysis is provided in the supplementary material.

\subsection{Individual and Rest-of-World (RoW) LoRA}  
Each client \( k \) maintains two types of LoRAs: an Individual LoRA, denoted by \( \mathbf{A}^L_k/\mathbf{B}^L_k \), and a RoW LoRA, denoted by \( \mathbf{A}^R_k/\mathbf{B}^R_k \). The Individual LoRA captures information obtained locally by the client and is isolated from the information of other clients. In contrast, the RoW LoRA represents the average of the Individual LoRAs from all other clients and remains frozen during the client’s local fine-tuning. Specifically, at the end of each round, client \( k \) updates its Individual LoRA \( \mathbf{A}^L_k/\mathbf{B}^L_k \) through local training. The RoW LoRA for client \( k \) is then computed as:
\begin{align}
\label{Rotw_compute}
    \mathbf{A}^R_k = \frac{1}{K-1} \sum_{m \neq k} \mathbf{A}^L_m, \quad \mathbf{B}^R_k = \frac{1}{K-1} \sum_{m \neq k} \mathbf{B}^L_m,
\end{align}
After clients upload their local LoRAs to the server, the server computes the RoW LoRA for each client individually and distributes it back to them. Alternatively, the server can calculate the global average of all clients' individual LoRAs and distribute this average back to the clients. Each client then determines its RoW LoRA by subtracting its individual LoRA from the global average LoRA.

It is crucial to emphasize that the RoW LoRA is not further updated on clients using their local datasets. Instead, it is only updated through averaging at the end of each learning round. Training the RoW LoRA on local datasets is unnecessary and potentially counterproductive, as subsequent averaging could cancel out any improvements. Furthermore, skipping local training for the RoW LoRA reduces the computational workload on clients by half.

\subsection{Dynamic Mixture-of-Experts Mechanism}  
A natural question arises: why not simply add the RoW LoRA directly to the pre-trained model, i.e., \( \mathbf{W}_0 \leftarrow \mathbf{W}_0 + \mathbf{B}^R_k \mathbf{A}^R_k \), since the RoW LoRA remains fixed during local training? This approach would allow each client to maintain only a single Individual LoRA, saving memory space. However, there are two key reasons why this is not desirable:  
\begin{enumerate}
    \item \textbf{Potential Contamination of the Pre-trained Model}: If the RoW LoRA underperforms, adding it directly to the pre-trained model risks contaminating it, making further corrections difficult.  
    \item \textbf{Loss of Flexibility}: Directly incorporating the RoW LoRA enforces a fixed weight, preventing dynamic adjustment between the Individual LoRA and shared information for different inputs. This lack of flexibility is problematic because different inputs may benefit variably from the local model versus the globally averaged model. A ``one-size-fits-all'' approach often fails to perform optimally across diverse inputs.  
\end{enumerate}
To address these challenges, we propose a dynamic weighting mechanism for combining the Individual LoRA and the RoW LoRA, enabling input-specific flexibility. Specifically, we introduce a mixer to dynamically adjust the contributions of the two LoRAs for each input. During the forward pass of each local epoch, the model's output is computed as:  
\[
y = \mathbf{W}_0 x + \alpha_k(x) \mathbf{B}^L_k \mathbf{A}^L_k x + \big(1 - \alpha_k(x)\big) \mathbf{B}^R_k \mathbf{A}^R_k x,  
\]  
where \( \alpha_k(x) \) determines for client $k$ the contribution weight of the Individual LoRA, and \( 1 - \alpha_k(x) \) determines the contribution of the RoW LoRA.  

Given the complexity of modern LLMs, where LoRA modules are inserted at multiple layers, learning \( \alpha(x) \) is non-trivial. Inspired by \cite{jordan1994hierarchical}, we adopt a Mixture-of-Experts (MoE) approach to compute \( \alpha(x) \). Specifically, we introduce a mixer \( \mathbf{G}_k \in \mathbb{R}^{2 \times d} \) which is a dense layer with trainable weights (a transformation matrix), followed by a softmax function, to learn the weighting:  
\begin{align}
\alpha(x), \, 1 - \alpha(x) = \text{softmax}(\mathbf{G}_kx).    
\end{align}
Importantly, the MoE mixers are personalized for each client, tailored to their specific domain tasks, and are not averaged across clients. This personalization ensures that the weight adjustments reflect each client's unique data distribution and learning objectives.  

\subsection{Algorithm Workflow}
By combining the two LoRA modules (Individual LoRA and RoW LoRA) with dynamic weights, our proposed method achieves a balance between leveraging useful information from other clients and isolating harmful interference. To better illustrate the FedALT training process, we provide an overview of the server and client operations at round \( t \).

\textbf{Server Side.}  
After local fine-tuning in round \( t \), the server receives each client \( k \)'s Individual LoRA modules (\( \mathbf{A}^L_k/\mathbf{B}^L_k \)). Upon receiving all clients' Individual LoRA modules, the server updates and computes the RoW LoRA modules for each client \( k \) based on \cref{Rotw_compute}. After calculating the RoW LoRA modules for client \( k \), the server broadcasts these updated RoW LoRA to the respective client.

\textbf{Client Side.}  
Each client \( k \) replaces its current RoW LoRA modules with the new RoW LoRA modules (\( \mathbf{A}^R_k/\mathbf{B}^R_k \)) received from the server. It then begins local fine-tuning for round \( t+1 \) using its local dataset. During the local fine-tuning process, the client updates the parameters in its Individual LoRA modules (\( \mathbf{A}^L_k/\mathbf{B}^L_k \)) and the mixer \( \mathbf{G}_k \), while the RoW LoRA modules (\( \mathbf{A}^R_k/\mathbf{B}^R_k \)) and the pre-trained model (\( \mathbf{W}_0 \)) remain fixed. Once the local fine-tuning is complete, the updated Individual LoRA is uploaded to the server, while \( \mathbf{G}_k \) stays local to the client.

\begin{table*}[t!]
\centering

\resizebox{0.95\textwidth}{!}{%
\begin{tabular}{l|cccccccc|c}
\toprule
\multirow{2}{*}{\textbf{Methods}} & \textbf{Commonsense} & \textbf{Coreference} & \textbf{Natural Language} & \textbf{Paraphrase} & \textbf{Reading} & \textbf{Sentiment} & \textbf{Struct to} & \textbf{Text} & \multirow{2}{*}{\textbf{Average}} \\
& \textbf{Reasoning} & \textbf{Resolution} & \textbf{Inference} & \textbf{Detection} & \textbf{Comprehension} & \textbf{Analysis} & \textbf{Text} & \textbf{Classification} & \\
\hline
\multicolumn{10}{c}{\textbf{LLaMA2-7B}} \\
\hline
Local  & 73.83 & 74.62 & 58.73 & 71.00 & 55.82 & 46.56 & \textbf{55.14} & 67.18 & 62.86 \\
FedIT       & 72.82 & 77.14 & 59.53 & 72.33 & 53.35 & 46.76 & 49.23 & 66.39 & 62.19 \\
FFA-LoRA    & 66.96 & 68.48 & 50.41 & 69.79 & 49.69 & 44.44 & 43.29 & 61.15 & 56.78 \\
FedSA       & 72.68 & 78.24 & 64.46 & 76.33 & 54.32 & 42.71 & 53.57 & 65.47 & 63.47 \\
FedDPA      & 74.81 & 81.88 & 62.92 & 76.33 & 55.91 & 47.86 & 52.02 & 65.42 & 64.64 \\
PF2LoRA       & 74.13 & 77.55 & 64.17 & \textbf{78.33} & 55.36 & 48.65 & 53.44 & 63.90 & 64.44 \\
FDLoRA       & \textbf{76.29} & 75.60 & 66.12 & 75.67 & 57.39 & 49.85 & 52.85 & 67.59 & 65.17 \\
\rowcolor{gray!40} \textbf{FedALT}      & 76.12 & \textbf{83.04} & \textbf{67.73} & 77.80 & \textbf{59.41} & \textbf{51.57} & 53.12 & \textbf{71.60} & \textbf{67.55} \\
\hline
\multicolumn{10}{c}{\textbf{Bloom-560M}} \\
\hline
Local      & 52.13 & 39.20 & 37.55 & 62.67 & 29.45 & 42.91 & \textbf{41.33} & 49.99 & 44.40 \\
FedIT      & 50.24 & 40.08 & 38.51 & 67.00 & 27.98 & 40.34 & 38.95 & 51.52 & 44.33 \\
FFA-LoRA   & 46.07 & 35.05 & 33.57 & 66.12 & 25.52 & 37.13 & 35.43 & 44.63 & 40.44 \\
FedSA      & 55.79 & 41.65 & 41.12 & 69.33 & 27.90 & \textbf{43.72} & 41.27 & 55.03 & 46.98 \\
FedDPA     & 55.43 & \textbf{41.88} & 41.80 & 69.80 & 29.74 & 41.89 & 39.54 & 55.85 & 46.99 \\
PF2LoRA    & 54.58 & 40.16 & 37.40 & 70.08 & 25.78 & 42.36 & 37.90 & 58.67 & 45.87 \\
FDLoRA     & 53.65 & 40.22 & 37.27 & 68.89 & 29.95 & 42.89 & 35.39 & 56.05 & 45.54 \\
\rowcolor{gray!40} \textbf{FedALT}     & \textbf{56.39} & 41.27 & \textbf{44.45} & \textbf{70.67} & \textbf{30.62} & 43.04 & 39.24 & \textbf{59.03} & \textbf{48.09} \\
\bottomrule
\end{tabular}%
}
\caption{\textbf{Performance comparison} with baseline methods across different models.}
\label{tab:performance_comparison_merged}
\end{table*}

\section{Experiments}

\textcolor{black}{\textcolor{black}{\textbf{Datasets.} Our training datasets are derived from Flan \cite{chung2024scaling}, a comprehensive benchmark comprising over 60 NLP datasets across more than 12 diverse task types. To simulate client heterogeneity in the main experiments, we assume 8 clients, each assigned a distinct NLP task dataset. Detailed descriptions, including task distributions and client assignments, are provided in supplementary material.}}

\textbf{Pre-trained Model and Baselines.}  
We utilize \textcolor{black}{LLaMA2-7B \cite{touvron2023llama2} and Bloom-560M \cite{le2023bloom}} as the pre-trained models for all baseline and proposed fine-tuning methods. To evaluate the performance of our proposed method, we compare it with the state-of-the-art methods in federated fine-tuning with LoRA, including \textbf{FedIT} \cite{zhang2024towards}, \textbf{FFA-LoRA} \cite{sun2024improving}. Additionally, we compare it with state-of-the-art methods focused on personalized federated fine-tuning with LoRA, such as \textbf{FedDPA} \cite{yang2024dual}, \textbf{PF2LoRA} \cite{hao2025personalized}, \textbf{FedSA} \cite{guo2024selective}, and \textbf{FDLoRA} \cite{qi2024fdlora}. We also include the \textbf{Local Only} baseline, where clients fine-tune their models independently without federated collaboration. Detailed descriptions of these baseline methods are provided in the supplementary material.

\begin{figure}[h]
  \centering
  \resizebox{0.95\linewidth}{!}{ 
    \begin{minipage}{\linewidth}
      \begin{subfigure}[t]{\linewidth}
        \centering
        \includegraphics[width=0.9\linewidth]{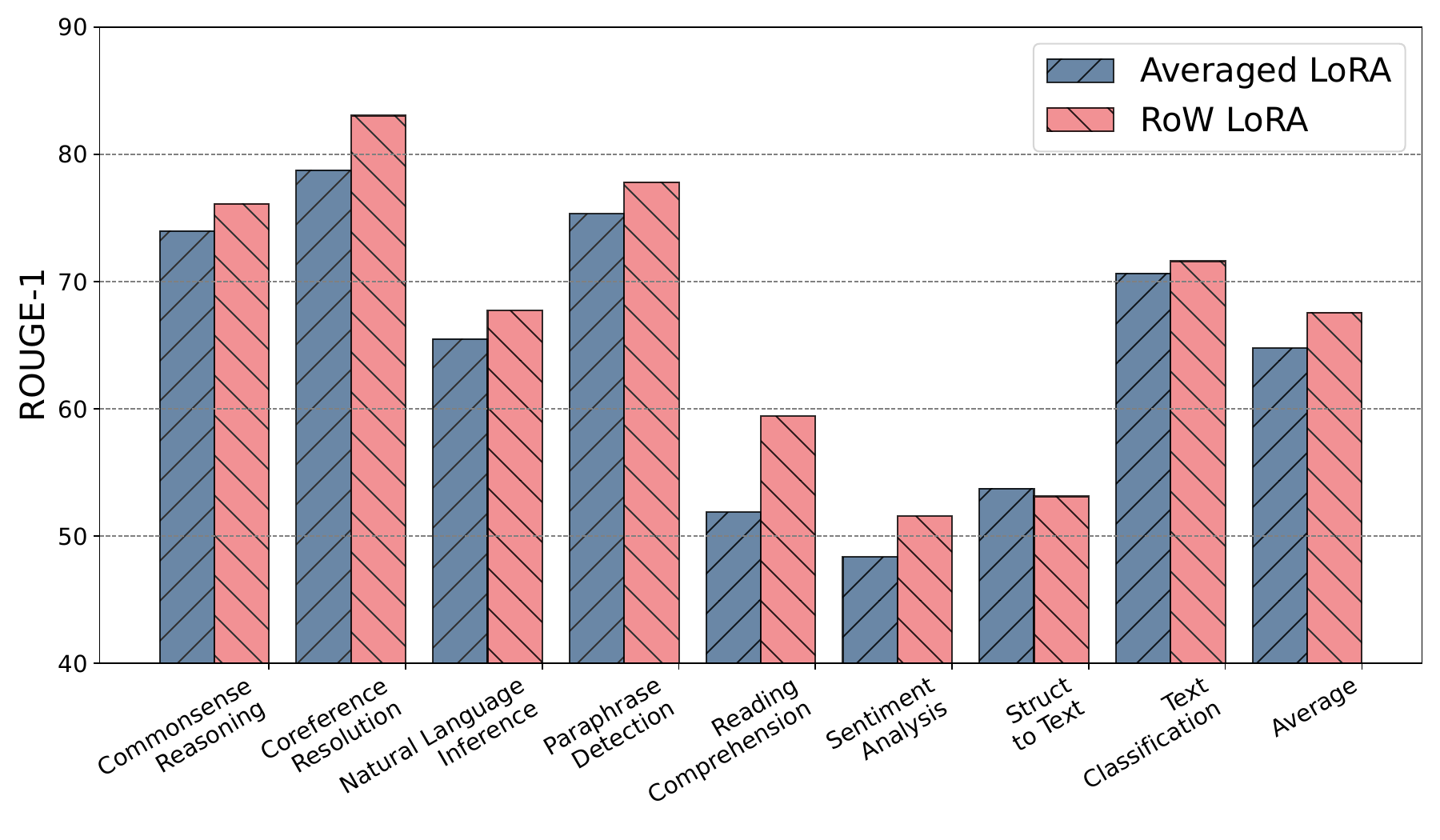}
        \caption{\textbf{Impact of RoW LoRA.} Isolating client-specific information is more effective than using globally averaged updates. }
        \label{fig:impact_rotw}
      \end{subfigure}
      
      \begin{subfigure}[t]{\linewidth}
        \centering
        \includegraphics[width=0.9\linewidth]{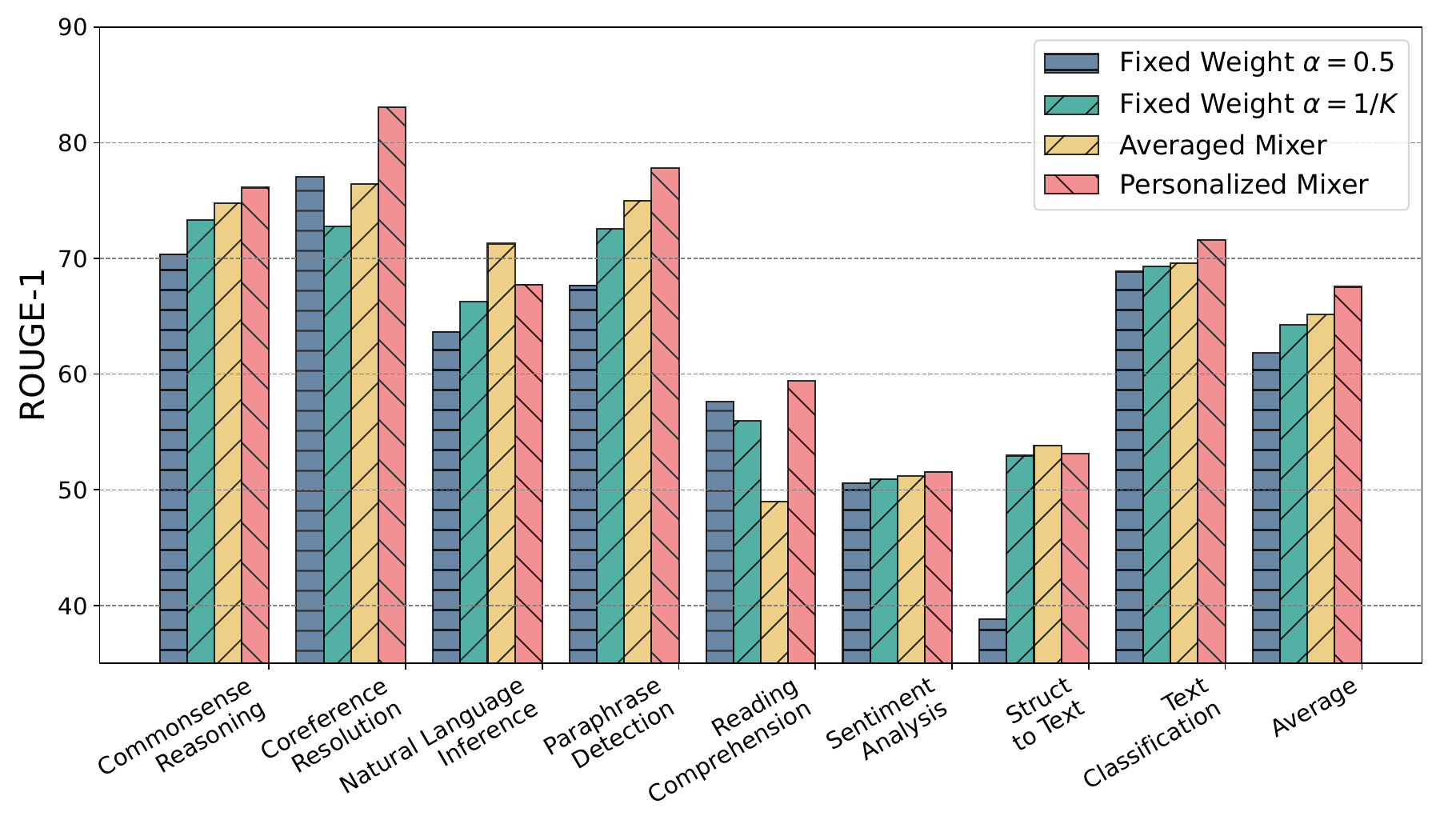}
        \caption{\textbf{Impact of dynamic weighting via MOE.} The personalized mixer outperforms fixed or shared weighting.}
        \label{fig:impact_moe}
      \end{subfigure}
    \end{minipage}
  }
  \caption{Ablation studies of FedALT.}
  \label{fig:impact_combined}
\end{figure}

\begin{figure*}[htbp]
  \centering

  \begin{subfigure}[t]{0.31\linewidth}
    \centering
    \includegraphics[width=\linewidth]{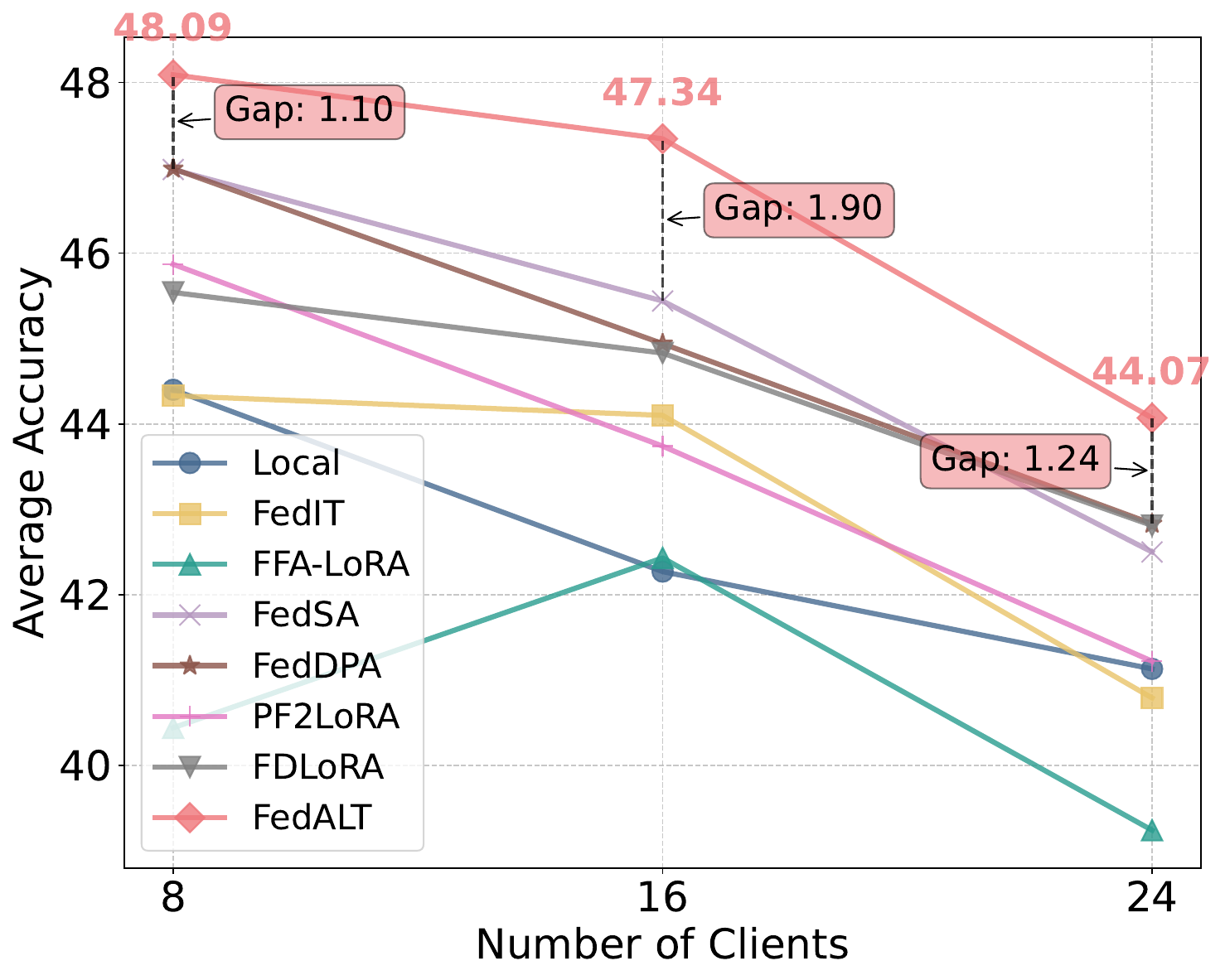}
    \caption{\textbf{Effect of number of clients.}}
    \label{fig:ablation_clients}
  \end{subfigure}
  \hfill
  \begin{subfigure}[t]{0.31\linewidth}
    \centering
    \includegraphics[width=\linewidth]{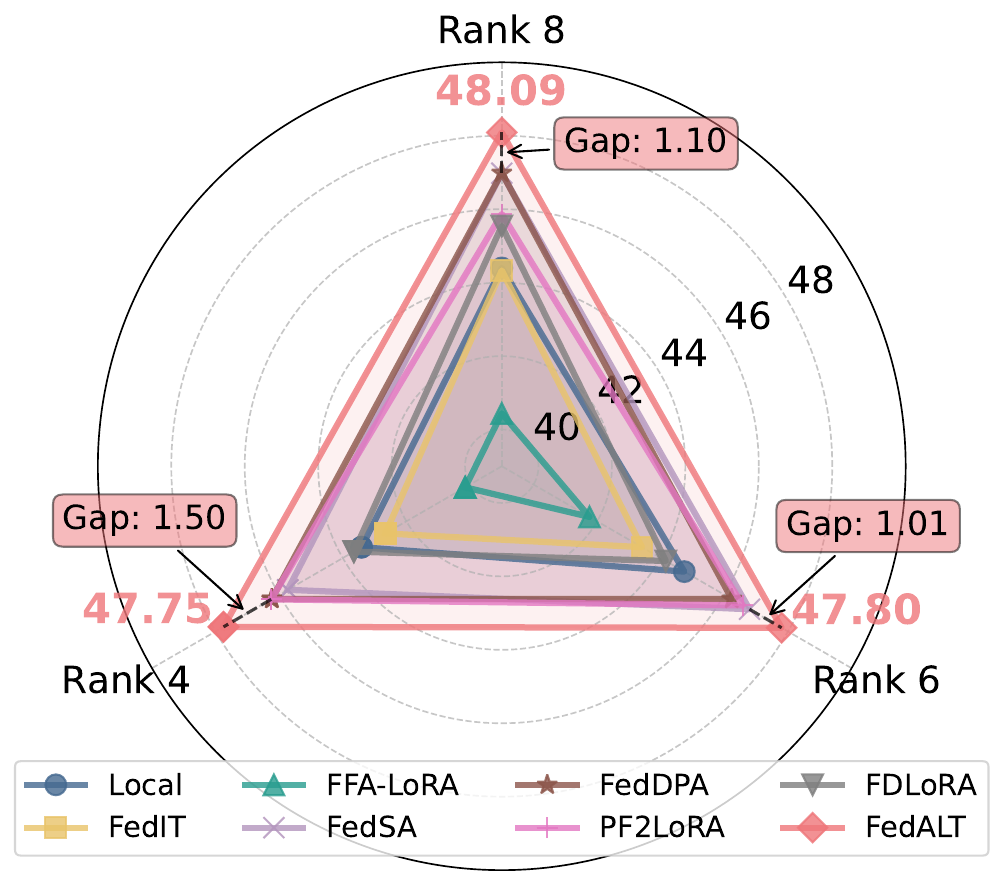}
    \caption{\textbf{Effect of LoRA rank.} }
    \label{fig:ablation_rank}
  \end{subfigure}
  \hfill
  \begin{subfigure}[t]{0.31\linewidth}
    \centering
    \includegraphics[width=\linewidth]{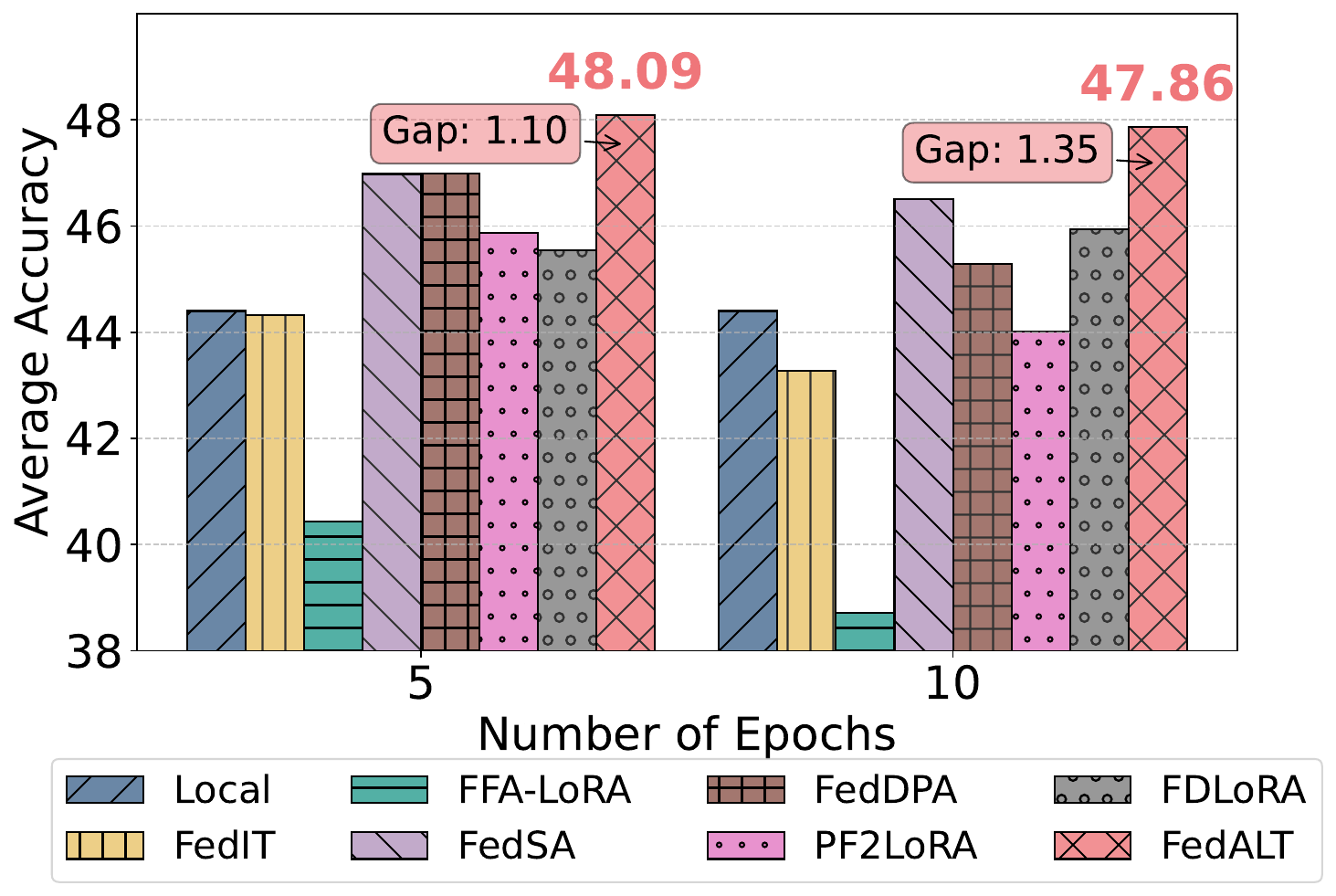}
    \caption{\textbf{Effect of local epochs.} }
    \label{fig:ablation_epochs}
  \end{subfigure}

  \caption{Sensitivity Analyses of FedALT under different training configurations.}
  \label{fig:ablation_combined}
\end{figure*}

\subsection{Performance Comparisons}
\label{sec :exp_res}
We compare the performance of FedALT, with other baseline methods based on each client’s performance on its specific test dataset, aligned with its training dataset. As shown in \cref{tab:performance_comparison_merged}, which reports the results on LLaMA2-7B pretrained models, our proposed method achieves the best overall performance compared to all baselines, with the average score exceeding that of the best-performing baseline, FDLoRA, by 2.38. FedALT consistently delivers superior performance across most tasks, outperforming the baselines by at least 1.61 on the Natural Language Inference task and 2.02 on the Reading Comprehension task. We find that on the Struct-to-Text, the Local Only method achieves the highest performance. This is likely due to the unique nature of this task, making it difficult to benefit from shared global knowledge. Furthermore, we observe that FedIT underperforms compared to Local Only on four client tasks, indicating the presence of harmful interference during aggregation. Personalized federated LoRA fine-tuning methods also fall short compared to FedALT, likely because they fail to effectively isolate harmful interference during the server aggregation step. Despite their designs for local/global LoRA interaction, these methods update LoRA modules using aggregated global information, which may carry conflicting information from other clients.

\textcolor{black}{To further evaluate the robustness of our proposed FedALT, we conduct experiments using the Bloom-560M model. As shown in \cref{tab:performance_comparison_merged}, FedALT consistently outperforms all baseline methods across the NLP tasks. These empirical results validate that conventional FedAvg-based approaches suffer from harmful cross-client parameter interference during aggregation. In contrast, FedALT's adaptive strategy, which balances local specialization and global knowledge, effectively alleviates this issue. The consistent performance gains observed with both LLaMA2-7B and Bloom-560M confirm the generalizability of our approach across diverse model families and parameter scales.}

\subsection{Ablation Study}
\label{ablation}

\begin{figure}
\centering
\begin{adjustbox}{width=0.9\linewidth,center}
\begin{tabular}{c|c|c|c}
\toprule
\textbf{Methods} & \textbf{Same Trainable} & \textbf{Same Inference} & \textbf{Avg. Perf.} \\
\midrule
RoW-Update (rank = 4) & $\checkmark$ & $\times$ & 63.59 \\
RoW-Update (rank = 8) & $\times$ & $\checkmark$ & 65.75 \\
\textbf{FedALT} & -- & -- & \textbf{67.55} \\
\bottomrule
\end{tabular}
\end{adjustbox}
\captionof{table}{\textbf{Impact of decoupling LoRA training.} }
\label{tab:decoupling}
\end{figure}

\textbf{Impact of Decoupling LoRA Training.} In our proposed method, only the Individual LoRA modules are updated locally during client fine-tuning, while the RoW LoRA remains frozen and is updated exclusively through server-side aggregation. This design mitigates harmful cross-client interference by preventing local updates to the shared global component. To evaluate the effectiveness of this decoupled training strategy, we compare FedALT with two alternative methods. The first baseline, \textit{RoW-Update (rank = 4)}, fine-tunes both the Individual and RoW LoRA modules locally, each with a rank of 4 to maintain the same total number of trainable parameters. The second baseline, \textit{RoW-Update (rank = 8)}, also fine-tunes both LoRA modules locally, but assigns a rank of 8 to each, resulting in a higher overall parameter count. For fair comparison, both baselines use the dynamic mixer. The key distinction lies in whether the RoW LoRA is updated locally. As shown in \cref{tab:decoupling}, while the second method outperforms the first due to its increased capacity, both alternatives underperform to FedALT. Notably, even with more trainable parameters, the second method fails to match FedALT’s performance, highlighting the benefits of decoupling local updates from the global component.

\textbf{Impact of RoW LoRA}
To evaluate the impact of using the RoW LoRA instead of the global average of all Local LoRAs (which includes each client’s Individual LoRA), we conduct an experiment. As shown in \cref{fig:impact_rotw}, leveraging the RoW LoRA significantly improves performance compared to using the global average. This result underscores the importance of isolating client-specific information from the aggregated global knowledge.

\textbf{Impact of Dynamic Weighting via MOE.} 
To enable input-specific flexibility, our proposed method adopts a MOE mechanism to dynamically adjust the contributions of the Individual LoRA and RoW LoRA \textcolor{black}{for each data sample}. We conduct experiments to evaluate the efficiency of this dynamic weighting by comparing it to two fixed weighting strategies. In the first fixed strategy, we set the contribution weight to a fixed value of \( \alpha = 0.5 \), meaning that the Individual LoRA and RoW LoRA contribute equally. In the second fixed strategy, we set the contribution weight to \( \alpha = 1/K \), where \( K \) is the number of clients. The results in \cref{fig:impact_moe} validate that our proposed dynamic weighting mechanism outperforms both fixed strategies, demonstrating its ability to enhance performance. Additionally, our dynamic weighting is implemented through the personal \( \mathbf{G}_k \) mixer, which is maintained locally by each client and is not aggregated during the training process. To further investigate this design choice, we conducted experiments where \( \mathbf{G}_k \) is averaged across clients. The results confirm that retaining a personalized \( \mathbf{G}_k \) for each client yields optimal performance, emphasizing the importance of input-specific and client-specific adjustments.


\subsection{Sensitivity Analysis}
\label{Sensitivity}


\textcolor{black}{\textbf{Effect of Number of Clients.} In our main experiments, we demonstrated the effectiveness of the proposed method using 8 clients. Here we extend our evaluation by conducting additional experiments with 16 and 24 clients, using BLOOM as the foundation model. For the 16-client setting, we distributed each original dataset from the 8-client configuration between two clients, resulting in each client having approximately half the data samples compared to the 8-client scenario, while maintaining the same 8 distinct task types across the federation. The results in \cref{fig:ablation_clients} show that FedALT consistently maintains its performance advantage even as the number of clients increases, outperforming the strongest baseline by 1.90 at least. Note that FedALT remains efficient due to its architectural design: each client maintains only its Individual LoRA and a single RoW LoRA, rather than storing separate components for every other client. Additionally, the server-side computation of the RoW LoRA remains lightweight, as it involves only simple averaging operations.}


\textcolor{black}{\textbf{Effect of LoRA Rank.} We evaluate the impact of different LoRA ranks \( r \in \{4, 6, 8\} \) using BLOOM as the pre-trained model, keeping all other settings consistent with the main experiment. As shown in \cref{fig:ablation_rank}, the proposed method consistently outperforms baseline methods across all rank values, demonstrating its adaptability and robustness.}

\textcolor{black}{\textbf{Effect of Local Epochs.} We investigate the effect of varying local training epochs (5 and 10), while keeping all other configurations unchanged. As presented in \cref{fig:ablation_epochs}, our method outperforms all baselines, further highlighting its robustness to different local training epochs.} 

Additional experiments on the \textbf{effect of data heterogeneity} can be found in the supplementary material.


\section{Conclusion}

In this work, we introduced FedALT, a personalized federated LoRA fine-tuning framework that departs from conventional FedAvg paradigms. Our key innovations include a decoupled training scheme with Individual and frozen Rest-of-World (RoW) LoRA components that mitigates harmful cross-client interference in heterogeneous settings. Additionally, we introduce an adaptive mixer inspired by Mixture-of-Experts that dynamically balances local and global information on an input-specific basis. Through comprehensive evaluation across diverse NLP tasks, FedALT demonstrates superior performance compared to state-of-the-art baselines while maintaining computational efficiency. Future research directions include investigating clustering-based approaches for maintaining multiple RoW LoRA components, and extending our framework to multi-modal foundation models.

\section{Acknowledgments}
The work of Jieming Bian, Lei Wang and Jie Xu is partially supported by NSF under grants 2433886, 2505381 and 2515982. The work of Letian Zhang is partially supported by NSF under grant 2348279 and also supported by MTSU Stark Land project. 

\bibliography{aaai2026}

@String(ICCV  = {Int. Conf. Comput. Vis.})

@String(AAAI  = {AAAI})

@String(ICASSP=	{ICASSP})

@String(ICCV  = {ICCV})

@inproceedings{kenton2019bert,
  title={Bert: Pre-training of deep bidirectional transformers for language understanding},
  author={Kenton, Jacob Devlin Ming-Wei Chang and Toutanova, Lee Kristina},
  booktitle={Proceedings of naacL-HLT},
  volume={1},
  number={2},
  year={2019},
  organization={Minneapolis, Minnesota}
}

@article{brown2020language,
  title={Language models are few-shot learners},
  author={Brown, Tom and Mann, Benjamin and Ryder, Nick and Subbiah, Melanie and Kaplan, Jared D and Dhariwal, Prafulla and Neelakantan, Arvind and Shyam, Pranav and Sastry, Girish and Askell, Amanda and others},
  journal={Advances in neural information processing systems},
  volume={33},
  pages={1877--1901},
  year={2020}
}

@article{raffel2020exploring,
  title={Exploring the limits of transfer learning with a unified text-to-text transformer},
  author={Raffel, Colin and Shazeer, Noam and Roberts, Adam and Lee, Katherine and Narang, Sharan and Matena, Michael and Zhou, Yanqi and Li, Wei and Liu, Peter J},
  journal={Journal of machine learning research},
  volume={21},
  number={140},
  pages={1--67},
  year={2020}
}

@article{touvron2023llama,
  title={Llama: Open and efficient foundation language models},
  author={Touvron, Hugo and Lavril, Thibaut and Izacard, Gautier and Martinet, Xavier and Lachaux, Marie-Anne and Lacroix, Timoth{\'e}e and Rozi{\`e}re, Baptiste and Goyal, Naman and Hambro, Eric and Azhar, Faisal and others},
  journal={arXiv preprint arXiv:2302.13971},
  year={2023}
}

@article{touvron2023llama2,
  title={Llama 2: Open foundation and fine-tuned chat models},
  author={Touvron, Hugo and Martin, Louis and Stone, Kevin and Albert, Peter and Almahairi, Amjad and Babaei, Yasmine and Bashlykov, Nikolay and Batra, Soumya and Bhargava, Prajjwal and Bhosale, Shruti and others},
  journal={arXiv preprint arXiv:2307.09288},
  year={2023}
}

@article{zeng2022glm,
  title={Glm-130b: An open bilingual pre-trained model},
  author={Zeng, Aohan and Liu, Xiao and Du, Zhengxiao and Wang, Zihan and Lai, Hanyu and Ding, Ming and Yang, Zhuoyi and Xu, Yifan and Zheng, Wendi and Xia, Xiao and others},
  journal={arXiv preprint arXiv:2210.02414},
  year={2022}
}

@article{zhou2024comprehensive,
  title={A comprehensive survey on pretrained foundation models: A history from bert to chatgpt},
  author={Zhou, Ce and Li, Qian and Li, Chen and Yu, Jun and Liu, Yixin and Wang, Guangjing and Zhang, Kai and Ji, Cheng and Yan, Qiben and He, Lifang and others},
  journal={International Journal of Machine Learning and Cybernetics},
  pages={1--65},
  year={2024},
  publisher={Springer}
}

@article{minaee2024large,
  title={Large language models: A survey},
  author={Minaee, Shervin and Mikolov, Tomas and Nikzad, Narjes and Chenaghlu, Meysam and Socher, Richard and Amatriain, Xavier and Gao, Jianfeng},
  journal={arXiv preprint arXiv:2402.06196},
  year={2024}
}

@article{hu2021lora,
  title={Lora: Low-rank adaptation of large language models},
  author={Hu, Edward J and Shen, Yelong and Wallis, Phillip and Allen-Zhu, Zeyuan and Li, Yuanzhi and Wang, Shean and Wang, Lu and Chen, Weizhu},
  journal={arXiv preprint arXiv:2106.09685},
  year={2021}
}

@article{ding2023parameter,
  title={Parameter-efficient fine-tuning of large-scale pre-trained language models},
  author={Ding, Ning and Qin, Yujia and Yang, Guang and Wei, Fuchao and Yang, Zonghan and Su, Yusheng and Hu, Shengding and Chen, Yulin and Chan, Chi-Min and Chen, Weize and others},
  journal={Nature Machine Intelligence},
  volume={5},
  number={3},
  pages={220--235},
  year={2023},
  publisher={Nature Publishing Group UK London}
}

@inproceedings{fu2023effectiveness,
  title={On the effectiveness of parameter-efficient fine-tuning},
  author={Fu, Zihao and Yang, Haoran and So, Anthony Man-Cho and Lam, Wai and Bing, Lidong and Collier, Nigel},
  booktitle={Proceedings of the AAAI conference on artificial intelligence},
  volume={37},
  number={11},
  pages={12799--12807},
  year={2023}
}

@article{han2024parameter,
  title={Parameter-efficient fine-tuning for large models: A comprehensive survey},
  author={Han, Zeyu and Gao, Chao and Liu, Jinyang and Zhang, Jeff and Zhang, Sai Qian},
  journal={arXiv preprint arXiv:2403.14608},
  year={2024}
}

@article{liu2022few,
  title={Few-shot parameter-efficient fine-tuning is better and cheaper than in-context learning},
  author={Liu, Haokun and Tam, Derek and Muqeeth, Mohammed and Mohta, Jay and Huang, Tenghao and Bansal, Mohit and Raffel, Colin A},
  journal={Advances in Neural Information Processing Systems},
  volume={35},
  pages={1950--1965},
  year={2022}
}

@article{li2021prefix,
  title={Prefix-tuning: Optimizing continuous prompts for generation},
  author={Li, Xiang Lisa and Liang, Percy},
  journal={arXiv preprint arXiv:2101.00190},
  year={2021}
}

@article{lester2021power,
  title={The power of scale for parameter-efficient prompt tuning},
  author={Lester, Brian and Al-Rfou, Rami and Constant, Noah},
  journal={arXiv preprint arXiv:2104.08691},
  year={2021}
}

@inproceedings{mcmahan2017communication,
  title={Communication-efficient learning of deep networks from decentralized data},
  author={McMahan, Brendan and Moore, Eider and Ramage, Daniel and Hampson, Seth and y Arcas, Blaise Aguera},
  booktitle={Artificial intelligence and statistics},
  pages={1273--1282},
  year={2017},
  organization={PMLR}
}

@inproceedings{zhang2024towards,
  title={Towards building the federatedGPT: Federated instruction tuning},
  author={Zhang, Jianyi and Vahidian, Saeed and Kuo, Martin and Li, Chunyuan and Zhang, Ruiyi and Yu, Tong and Wang, Guoyin and Chen, Yiran},
  booktitle={ICASSP 2024-2024 IEEE International Conference on Acoustics, Speech and Signal Processing (ICASSP)},
  pages={6915--6919},
  year={2024},
  organization={IEEE}
}

@article{sun2024improving,
  title={Improving loRA in privacy-preserving federated learning},
  author={Sun, Youbang and Li, Zitao and Li, Yaliang and Ding, Bolin},
  journal={arXiv preprint arXiv:2403.12313},
  year={2024}
}

@article{wang2024flora,
  title={Flora: Federated fine-tuning large language models with heterogeneous low-rank adaptations},
  author={Wang, Ziyao and Shen, Zheyu and He, Yexiao and Sun, Guoheng and Wang, Hongyi and Lyu, Lingjuan and Li, Ang},
  journal={arXiv preprint arXiv:2409.05976},
  year={2024}
}

@article{guo2024selective,
  title={Selective Aggregation for Low-Rank Adaptation in Federated Learning},
  author={Guo, Pengxin and Zeng, Shuang and Wang, Yanran and Fan, Huijie and Wang, Feifei and Qu, Liangqiong},
  journal={arXiv preprint arXiv:2410.01463},
  year={2024}
}

@article{yang2024dual,
  title={Dual-Personalizing Adapter for Federated Foundation Models},
  author={Yang, Yiyuan and Long, Guodong and Shen, Tao and Jiang, Jing and Blumenstein, Michael},
  journal={arXiv preprint arXiv:2403.19211},
  year={2024}
}

@article{qi2024fdlora,
  title={FDLoRA: Personalized Federated Learning of Large Language Model via Dual LoRA Tuning},
  author={Qi, Jiaxing and Luan, Zhongzhi and Huang, Shaohan and Fung, Carol and Yang, Hailong and Qian, Depei},
  journal={arXiv preprint arXiv:2406.07925},
  year={2024}
}

@article{tan2022towards,
  title={Towards personalized federated learning},
  author={Tan, Alysa Ziying and Yu, Han and Cui, Lizhen and Yang, Qiang},
  journal={IEEE transactions on neural networks and learning systems},
  volume={34},
  number={12},
  pages={9587--9603},
  year={2022},
  publisher={IEEE}
}

@article{deng2020adaptive,
  title={Adaptive personalized federated learning},
  author={Deng, Yuyang and Kamani, Mohammad Mahdi and Mahdavi, Mehrdad},
  journal={arXiv preprint arXiv:2003.13461},
  year={2020}
}

@inproceedings{collins2021exploiting,
  title={Exploiting shared representations for personalized federated learning},
  author={Collins, Liam and Hassani, Hamed and Mokhtari, Aryan and Shakkottai, Sanjay},
  booktitle={International conference on machine learning},
  pages={2089--2099},
  year={2021},
  organization={PMLR}
}

@article{fallah2020personalized,
  title={Personalized federated learning: A meta-learning approach},
  author={Fallah, Alireza and Mokhtari, Aryan and Ozdaglar, Asuman},
  journal={arXiv preprint arXiv:2002.07948},
  year={2020}
}

@article{jordan1994hierarchical,
  title={Hierarchical mixtures of experts and the EM algorithm},
  author={Jordan, Michael I and Jacobs, Robert A},
  journal={Neural computation},
  volume={6},
  number={2},
  pages={181--214},
  year={1994},
  publisher={MIT Press}
}

@inproceedings{huang2022learn,
  title={Learn from others and be yourself in heterogeneous federated learning},
  author={Huang, Wenke and Ye, Mang and Du, Bo},
  booktitle={Proceedings of the IEEE/CVF Conference on Computer Vision and Pattern Recognition},
  pages={10143--10153},
  year={2022}
}

@inproceedings{mendieta2022local,
  title={Local learning matters: Rethinking data heterogeneity in federated learning},
  author={Mendieta, Matias and Yang, Taojiannan and Wang, Pu and Lee, Minwoo and Ding, Zhengming and Chen, Chen},
  booktitle={Proceedings of the IEEE/CVF Conference on Computer Vision and Pattern Recognition},
  pages={8397--8406},
  year={2022}
}

@article{wang2024taming,
  title={Taming Cross-Domain Representation Variance in Federated Prototype Learning with Heterogeneous Data Domains},
  author={Wang, Lei and Bian, Jieming and Zhang, Letian and Chen, Chen and Xu, Jie},
  journal={arXiv preprint arXiv:2403.09048},
  year={2024}
}

@article{chung2024scaling,
  title={Scaling instruction-finetuned language models},
  author={Chung, Hyung Won and Hou, Le and Longpre, Shayne and Zoph, Barret and Tay, Yi and Fedus, William and Li, Yunxuan and Wang, Xuezhi and Dehghani, Mostafa and Brahma, Siddhartha and others},
  journal={Journal of Machine Learning Research},
  volume={25},
  number={70},
  pages={1--53},
  year={2024}
}

@article{tian2024hydralora,
  title={HydraLoRA: An Asymmetric LoRA Architecture for Efficient Fine-Tuning},
  author={Tian, Chunlin and Shi, Zhan and Guo, Zhijiang and Li, Li and Xu, Chengzhong},
  journal={arXiv preprint arXiv:2404.19245},
  year={2024}
}

@article{lermen2023lora,
  title={Lora fine-tuning efficiently undoes safety training in llama 2-chat 70b},
  author={Lermen, Simon and Rogers-Smith, Charlie and Ladish, Jeffrey},
  journal={arXiv preprint arXiv:2310.20624},
  year={2023}
}

@article{wang2023multilora,
  title={Multilora: Democratizing lora for better multi-task learning},
  author={Wang, Yiming and Lin, Yu and Zeng, Xiaodong and Zhang, Guannan},
  journal={arXiv preprint arXiv:2311.11501},
  year={2023}
}

@article{sheng2023s,
  title={S-lora: Serving thousands of concurrent lora adapters},
  author={Sheng, Ying and Cao, Shiyi and Li, Dacheng and Hooper, Coleman and Lee, Nicholas and Yang, Shuo and Chou, Christopher and Zhu, Banghua and Zheng, Lianmin and Keutzer, Kurt and others},
  journal={arXiv preprint arXiv:2311.03285},
  year={2023}
}

@article{liu2023moelora,
  title={Moelora: An moe-based parameter efficient fine-tuning method for multi-task medical applications},
  author={Liu, Qidong and Wu, Xian and Zhao, Xiangyu and Zhu, Yuanshao and Xu, Derong and Tian, Feng and Zheng, Yefeng},
  journal={arXiv preprint arXiv:2310.18339},
  year={2023}
}

@inproceedings{wang2024aggregation,
  title={An aggregation-free federated learning for tackling data heterogeneity},
  author={Wang, Yuan and Fu, Huazhu and Kanagavelu, Renuga and Wei, Qingsong and Liu, Yong and Goh, Rick Siow Mong},
  booktitle={Proceedings of the IEEE/CVF Conference on Computer Vision and Pattern Recognition},
  pages={26233--26242},
  year={2024}
}

@article{yang2024fedfed,
  title={FedFed: Feature distillation against data heterogeneity in federated learning},
  author={Yang, Zhiqin and Zhang, Yonggang and Zheng, Yu and Tian, Xinmei and Peng, Hao and Liu, Tongliang and Han, Bo},
  journal={Advances in Neural Information Processing Systems},
  volume={36},
  year={2024}
}

@article{bian2024accelerating,
  title={Accelerating hybrid federated learning convergence under partial participation},
  author={Bian, Jieming and Wang, Lei and Yang, Kun and Shen, Cong and Xu, Jie},
  journal={IEEE Transactions on Signal Processing},
  year={2024},
  publisher={IEEE}
}

@article{hadi2023survey,
  title={A survey on large language models: Applications, challenges, limitations, and practical usage},
  author={Hadi, Muhammad Usman and Qureshi, Rizwan and Shah, Abbas and Irfan, Muhammad and Zafar, Anas and Shaikh, Muhammad Bilal and Akhtar, Naveed and Wu, Jia and Mirjalili, Seyedali and others},
  journal={Authorea Preprints},
  year={2023},
  publisher={Authorea}
}

@article{bai2024federated,
  title={Federated fine-tuning of large language models under heterogeneous language tasks and client resources},
  author={Bai, Jiamu and Chen, Daoyuan and Qian, Bingchen and Yao, Liuyi and Li, Yaliang},
  journal={arXiv e-prints},
  pages={arXiv--2402},
  year={2024}
}

@inproceedings{cho2024heterogeneous,
  title={Heterogeneous lora for federated fine-tuning of on-device foundation models},
  author={Cho, Yae Jee and Liu, Luyang and Xu, Zheng and Fahrezi, Aldi and Joshi, Gauri},
  booktitle={Proceedings of the 2024 Conference on Empirical Methods in Natural Language Processing},
  pages={12903--12913},
  year={2024}
}

@inproceedings{kuang2024federatedscope,
  title={Federatedscope-llm: A comprehensive package for fine-tuning large language models in federated learning},
  author={Kuang, Weirui and Qian, Bingchen and Li, Zitao and Chen, Daoyuan and Gao, Dawei and Pan, Xuchen and Xie, Yuexiang and Li, Yaliang and Ding, Bolin and Zhou, Jingren},
  booktitle={Proceedings of the 30th ACM SIGKDD Conference on Knowledge Discovery and Data Mining},
  pages={5260--5271},
  year={2024}
}

@inproceedings{wu2024fedbiot,
  title={Fedbiot: Llm local fine-tuning in federated learning without full model},
  author={Wu, Feijie and Li, Zitao and Li, Yaliang and Ding, Bolin and Gao, Jing},
  booktitle={Proceedings of the 30th ACM SIGKDD Conference on Knowledge Discovery and Data Mining},
  pages={3345--3355},
  year={2024}
}

@article{sun2022conquering,
  title={Conquering the communication constraints to enable large pre-trained models in federated learning},
  author={Sun, Guangyu and Khalid, Umar and Mendieta, Matias and Yang, Taojiannan and Wang, Pu and Lee, Minwoo and Chen, Chen},
  journal={arXiv preprint arXiv:2210.01708},
  year={2022}
}

@article{yao2022benchmark,
  title={A benchmark for federated hetero-task learning},
  author={Yao, Liuyi and Gao, Dawei and Wang, Zhen and Xie, Yuexiang and Kuang, Weirui and Chen, Daoyuan and Wang, Haohui and Dong, Chenhe and Ding, Bolin and Li, Yaliang},
  journal={arXiv preprint arXiv:2206.03436},
  year={2022}
}

@article{le2023bloom,
  title={Bloom: A 176b-parameter open-access multilingual language model},
  author={Le Scao, Teven and Fan, Angela and Akiki, Christopher and Pavlick, Ellie and Ili{\'c}, Suzana and Hesslow, Daniel and Castagn{\'e}, Roman and Luccioni, Alexandra Sasha and Yvon, Fran{\c{c}}ois and Gall{\'e}, Matthias and others},
  year={2023}
}

@article{hao2025personalized,
  title={Personalized Federated Fine-tuning for Heterogeneous Data: An Automatic Rank Learning Approach via Two-Level LoRA},
  author={Hao, Jie and Wu, Yuman and Payani, Ali and Lee, Myungjin and Liu, Mingrui},
  journal={arXiv preprint arXiv:2503.03920},
  year={2025}
}

@article{bian2025survey,
  title={A survey on parameter-efficient fine-tuning for foundation models in federated learning},
  author={Bian, Jieming and Peng, Yuanzhe and Wang, Lei and Huang, Yin and Xu, Jie},
  journal={arXiv preprint arXiv:2504.21099},
  year={2025}
}

@article{zhang2025fedel,
  title={FedEL: Federated Elastic Learning for Heterogeneous Devices},
  author={Zhang, Letian and Chen, Bo and Bian, Jieming and Wang, Lei and Xu, Jie},
  journal={arXiv preprint arXiv:2509.16902},
  year={2025}
}

@inproceedings{10.1145/3746027.3755226,
author = {Liu, Junkang and Shang, Fanhua and Tian, Yuxuan and Liu, Hongying and Liu, Yuanyuan},
title = {Consistency of Local and Global Flatness for Federated Learning},
year = {2025},
isbn = {9798400720352},
publisher = {Association for Computing Machinery},
address = {New York, NY, USA},
url = {https://doi.org/10.1145/3746027.3755226},
doi = {10.1145/3746027.3755226},
booktitle = {Proceedings of the 33rd ACM International Conference on Multimedia},
pages = {3875–3883},
numpages = {9},
keywords = {federated learning, generalization ability, sharpness-aware minimization},
location = {Dublin, Ireland},
series = {MM '25}
}

@inproceedings{liu2024fedbcgd,
  title={Fedbcgd: Communication-efficient accelerated block coordinate gradient descent for federated learning},
  author={Liu, Junkang and Shang, Fanhua and Liu, Yuanyuan and Liu, Hongying and Li, Yuangang and Gong, YunXiang},
  booktitle={Proceedings of the 32nd ACM International Conference on Multimedia},
  pages={2955--2963},
  year={2024}
}

@inproceedings{liuimproving,
  title={Improving Generalization in Federated Learning with Highly Heterogeneous Data via Momentum-Based Stochastic Controlled Weight Averaging},
  author={Liu, Junkang and Liu, Yuanyuan and Shang, Fanhua and Liu, Hongying and Liu, Jin and Feng, Wei},
  booktitle={Forty-second International Conference on Machine Learning},
  year={2025}
}

@misc{liu2025fedadamwcommunicationefficientoptimizerconvergence,
      title={FedAdamW: A Communication-Efficient Optimizer with Convergence and Generalization Guarantees for Federated Large Models}, 
      author={Junkang Liu and Fanhua Shang and Kewen Zhu and Hongying Liu and Yuanyuan Liu and Jin Liu},
      year={2025},
      eprint={2510.27486},
      archivePrefix={arXiv},
      primaryClass={cs.LG},
      url={https://arxiv.org/abs/2510.27486}, 
}

@InProceedings{Bian_2025_ICCV,
    author    = {Bian, Jieming and Wang, Lei and Zhang, Letian and Xu, Jie},
    title     = {LoRA-FAIR: Federated LoRA Fine-Tuning with Aggregation and Initialization Refinement},
    booktitle = {Proceedings of the IEEE/CVF International Conference on Computer Vision (ICCV)},
    month     = {October},
    year      = {2025},
    pages     = {3737-3746}
}

\clearpage


\clearpage
\appendix

\section{Datasets}
\label{appendix: dataset}

In this paper, we follow the setup proposed in \cite{yang2024dual} to construct federated datasets for heterogeneous client settings using the Flan collection \cite{chung2024scaling}. The Flan benchmarks comprise over 12 NLP task categories, each containing multiple datasets. These tasks, derived from diverse contextual domains, inherently involve complex distribution shifts. We consider three federated dataset configurations. In the \textit{High Heterogeneous} setting, 8 clients are assigned entirely different NLP tasks, resulting in a total of 8 distinct task types. In the \textit{Low Heterogeneous} setting, 8 clients are distributed across only two tasks—Sentiment Analysis and Natural Language Inference—with all clients working on the same task drawing data from a common source, thereby minimizing heterogeneity. The \textit{Mild Heterogeneous} setting assigns 8 clients to datasets spanning 6 different NLP tasks. To better simulate realistic FL environments, where each client has limited access to large-scale data, we apply a downsampling strategy: each client is assigned 600 training samples and 300 testing samples. For our extended evaluations with 16 and 24 clients, we maintain the same total dataset size while distributing it across more clients. Specifically, in the 16-client configuration, each client receives 300 training samples and 150 testing samples, while in the 24-client setting, each client is allocated 200 training samples and 100 testing samples. The specific tasks and datasets used in each configuration are provided in Tables \ref{tab:fl_tasks_datasets_1}, \ref{tab:fl_tasks_datasets_2}, \ref{tab:fl_tasks_datasets_3}. Although some datasets appear in multiple configurations, sample selections are made independently and randomly.

The original Flan tasks exhibit highly diverse formats, making standardized preprocessing and evaluation across LLMs challenging. Even within a single dataset (e.g., Sentiment140), sample formats can vary significantly, as illustrated in \cref{tab:federated_dataset_examples_SA,tab:federated_dataset_examples_ST}. To ensure consistency, we adopt a unified prompt template based on \cite{yang2024dual}, shown in \cref{table-template}. \textcolor{black}{For evaluation, we follow FedDPA \cite{yang2024dual} and use ROUGE-1 as the metric across all tasks, providing a unified measure of model performance. This choice is particularly appropriate as Flan inherently structures NLP tasks in a generative format, where ROUGE-1 effectively captures lexical overlap between generated outputs and reference answers. Using ROUGE-1 enables direct comparability across diverse tasks and clients, which is essential for analyzing federated learning dynamics. While task-specific metrics might offer more nuanced evaluation for individual tasks, they would introduce incomparability issues that complicate assessment of the overall federated learning performance, making ROUGE-1 a methodologically sound choice.}

\begin{table*}[ht]
\centering
\begin{adjustbox}{width=0.8\linewidth}
\begin{tabular}{c|c|c}
\toprule[1.25pt]
\textbf{Task} & \textbf{Dataset} & \textbf{Description} \\ \midrule
Text Classification & ag\_news\_subset & News classification into categories. \\ 
Natural Language Inference & snli & Determine relationships between sentence pairs. \\ 
Reading Comprehension & openbookqa & Open-book question answering. \\ 
Paraphrase Detection & glue\_mrpc & Identify semantic equivalence in sentences. \\ 
Commonsense Reasoning & story\_cloze & Choose the correct story ending. \\ 
Struct to Text & common\_gen & Generate text from concepts. \\ 
Sentiment Analysis & sentiment140 & Sentiment analysis of tweets. \\ 
Coreference Resolution & definite\_pronoun\_resolution & Resolve ambiguous pronouns. \\ 
\bottomrule[1.25pt]
\end{tabular}
\end{adjustbox}
\caption{Tasks and Datasets for High Heterogeneous}
\label{tab:fl_tasks_datasets_1}
\end{table*}

\begin{table*}[ht]
\centering
\begin{adjustbox}{width=0.8\linewidth}
\begin{tabular}{c|c|c}
\toprule[1.25pt]
\textbf{Task} & \textbf{Dataset} & \textbf{Description} \\ \midrule
Struct to Text 1 & common\_gen & Generate text from concepts. \\ 
Struct to Text 2 & dart & Generate structured text descriptions. \\ 
Coreference Resolution & definite\_pronoun\_resolution & Resolve ambiguous pronouns. \\ 
Natural Language Inference 1 & snli & Determine relationships between sentence pairs. \\ 
Natural Language Inference 2 & qnli & Answer questions based on context. \\ 
Paraphrase Detection & glue\_mrpc & Identify semantic equivalence in sentences. \\ 
Text Classification & ag\_news\_subset & Classify news into categories. \\ 
Sentiment Analysis & sst2 & Analyze sentiment of text. \\ 
\bottomrule[1.25pt]
\end{tabular}
\end{adjustbox}
\caption{Tasks and Datasets for Mild Heterogeneous}
\label{tab:fl_tasks_datasets_2}
\end{table*}

\begin{table*}[ht]
\centering
\begin{adjustbox}{width=0.8\linewidth}
\begin{tabular}{c|c|c}
\toprule[1.25pt]
\textbf{Task} & \textbf{Dataset} & \textbf{Description} \\ \midrule
Sentiment Analysis 1 & sst2 & Analyze sentiment of short movie reviews. \\
Sentiment Analysis 2 & sentiment140 & Sentiment analysis of tweets. \\
Natural Language Inference 1 & qnli & Answer questions based on context. \\
Natural Language Inference 2 & snli & Determine relationships between sentence pairs. \\
\bottomrule[1.25pt]
\end{tabular}
\end{adjustbox}
\caption{Tasks and Datasets for Low Heterogeneous}
\label{tab:fl_tasks_datasets_3}
\end{table*}

\begin{table}[ht]
\centering
\begin{adjustbox}{width=\linewidth}
\begin{tabular}{p{11.5cm}} 
\toprule
\textbf{Data Examples} \\
\midrule
\textbf{Instruction:} @celinieee The crafts thingy is already done. Describe the sentiment embodied by this tweet. OPTIONS: - negative - positive. \\
\textbf{Output:} negative \\
\textbf{Dataset:} sentiment140 \\
\midrule
\textbf{Instruction:} Write a positive tweet. \\
\textbf{Output:} @MandinaM Very well indeed thank you.. Yourself? Just about to start work.. 9 hours to go.. \\
\textbf{Dataset:} sentiment140 \\
\bottomrule
\end{tabular}
\end{adjustbox}
\caption{Examples of Data in Sentiment Analysis Task.}
\label{tab:federated_dataset_examples_SA}
\end{table}

\begin{table}[ht]
\centering
\begin{adjustbox}{width=\linewidth}
\begin{tabular}{p{11.5cm}} 
\toprule
\textbf{Data Examples} \\
\midrule
\textbf{Instruction:} What are the most important words in the following sentence: pig in a pen on a pig farm \\
\textbf{Output:} farm, pen, pig \\
\textbf{Dataset:} common\_gen \\
\midrule
\textbf{Instruction:} Generate a sentence about this data: Wildwood eatType coffee shop; Wildwood priceRange moderate; Wildwood customer rating 1 out of 5; Wildwood near Ranch \\
\textbf{Output:} The Wildwood coffee shop is an average-priced location that serves a light menu of food and alcohol. Located near the Ranch, customers give this location a 1 out of 5 stars. \\
\textbf{Dataset:} dart \\
\bottomrule
\end{tabular}
\end{adjustbox}
\caption{Examples of Data in Struct to Text Task.}
\label{tab:federated_dataset_examples_ST}
\end{table}

\begin{table}[ht]
    \centering
    \begin{tabular}{ll}
        \toprule[1.25pt]
        & \multirow{1}{*}{\textbf{Template}} \\ \hline
        Prompt Input & \makecell[l]{\textbf{Instruction}: \{instruction\} \\ \textbf{Response}:} \\
        \bottomrule[1.25pt]
    \end{tabular}
    \caption{Prompt Template.}
    \label{table-template}
\end{table}

\section{Training Implementations}
\label{appendix: training}

We implement all methods using PyTorch and conduct all experiments on six NVIDIA A100 GPUs. For the federated methods, we set the local epochs to 5 and the global rounds to 20. For the \textit{Local Only} baseline, the total number of training rounds is set to 50. The learning rate is explored in the range \(\{3 \times 10^{-2}, 1 \times 10^{-3}, 3 \times 10^{-3}, 1 \times 10^{-4}, 3 \times 10^{-4}, 1 \times 10^{-5}\}\), and the AdamW optimizer is utilized, following \cite{yang2024dual}. 

\textcolor{black}{LoRA modules are inserted into the \(q\) and \(v\) components of each self-attention layer. For LLaMA2 experiments, we set the finetuning batch size to 8 and the inference batch size to 2. For Bloom experiments, both finetuning and inference batch sizes are set to 32. The rank of the trainable LoRA for each method is primarily set to 8, with additional experiments using ranks 4 and 6 conducted for sensitivity analysis. We configure other LoRA hyperparameters with \texttt{lora\_alpha} set to 32 and \texttt{lora\_dropout} set to 0.05. Due to computational constraints, the pre-trained LLaMA2 model is loaded using 8-bit quantization to reduce memory requirements while maintaining performance.}

\section{Compared Methods}
\label{appendix: methods}
A detailed overview of the baseline methods used for comparison in our experiments is provided below.

\textbf{FedIT \cite{zhang2024towards}.}  
FedIT integrates LoRA with the FedAvg framework, making it one of the earliest approaches to combine these techniques. However, under heterogeneous client data settings, FedIT suffers from harmful cross-client interference during the aggregation step, as highlighted in the motivational experiments. This interference often reduces its efficiency and, in some cases, results in worse performance than the \textit{Local Only} method.

\textbf{FFA-LoRA \cite{sun2024improving}.}  
FFA-LoRA aims to reduce communication costs by fixing the randomly initialized non-zero \(\mathbf{A}\) matrices and fine-tuning only the zero-initialized \(\mathbf{B}\) matrices. While this approach lowers communication overhead, it impairs the learning capacity of LoRA, as fixed \(\mathbf{A}\) matrices limit the model’s ability to adapt to diverse client data, resulting in suboptimal performance.

\textbf{FedSA \cite{guo2024selective}.}  
FedSA introduces an asymmetric approach to LoRA matrix updates in federated learning, where only the \(\mathbf{A}\) matrices are shared with the server for aggregation, while the \(\mathbf{B}\) matrices remain local to capture client-specific knowledge. However, even the client-specific \(\mathbf{B}\) matrices may contain valuable knowledge that could benefit other clients. The failure to leverage this knowledge effectively results in reduced overall performance.

\textbf{PF2LoRA \cite{hao2025personalized}.}  
PF2LoRA adopts a two-level low-rank adaptation framework, where a common adapter shared across all clients is combined with client-specific lightweight adapters for personalization. This bilevel optimization framework aims to balance shared knowledge and personalization. However, the aggregation of the shared adapter during the server aggregation step can reintroduce harmful interference, limiting the method's effectiveness in highly heterogeneous settings.

\textbf{FDLoRA \cite{qi2024fdlora}.}  
FDLoRA follows a two-step process where personalized LoRA modules are first trained locally and then used to initialize federated training to obtain a global LoRA module. An adaptive fusion mechanism combines the two LoRA modules during fine-tuning. While this method improves personalization, it requires additional server-side datasets for effective global LoRA training. Moreover, the aggregation step during global LoRA training remains prone to interference, reducing its robustness in handling client heterogeneity.

\textbf{FedDPA \cite{yang2024dual}.}  
FedDPA is the most closely related to our proposed method, as it also targets settings where clients possess heterogeneous tasks (not just label non-IID data). FedDPA employs both global and local LoRA modules, and enhances personalization by introducing a local fine-tuning step excluded from aggregation. However, our proposed FedALT departs from FedDPA in several critical ways: \textbf{First}, in both variants of FedDPA, the global LoRA module is trained independently of the local LoRA modules. During global training, the local modules are excluded, leading to persistent interference—similar to FedIT—and the local LoRA modules are never shared across clients. In contrast, FedALT introduces the Rest-of-World (RoW) LoRA, which captures knowledge from other clients via aggregation, allowing more effective information sharing. \textbf{Second}, FedALT employs a dynamic mixer that enables each client to adaptively balance the contributions of its Individual and RoW LoRA modules on a per-input basis, enabling finer-grained personalization. \textbf{Third}, FedDPA separates local and global LoRA training into distinct phases, introducing additional hyperparameters to tune the duration of each phase. Short local training reduces personalization; overly long training approximates local-only fine-tuning, diminishing the benefits of FL. In contrast, FedALT removes the need for phase separation and additional hyperparameters, simplifying training and improving robustness.

\section{Discussion: Why a Single RoW LoRA?}  
FedALT prioritizes local training while selectively incorporating useful information from other clients via a dynamic weighting mechanism. Ideally, each client could tailor this incorporation by maintaining separate LoRA components for every other client. In this approach, client \( k \) would receive the local LoRA components \( \mathbf{A}^L_j / \mathbf{B}^L_j \) from all other clients \( j \neq k \), freeze them during local training, and learn separate weights for each \( j \): 
\[
y = \mathbf{W}_0 x + \alpha_{k, k}(x) \mathbf{B}^L_k \mathbf{A}^L_k x + \sum_{j\neq k} \alpha_{k,j}(x) \mathbf{B}^L_j \mathbf{A}^L_j x
\]
where \( \sum_{j} \alpha_{k,j}(x) = 1 \). This would allow client \( k \) to capture the distinct influence of each other client's LoRA. However, this approach presents significant drawbacks. First, each client would need to store and communicate a LoRA component for every other client, leading to memory and bandwidth overhead that scales linearly with the number of clients \( K \), making it impractical for large-scale settings. Second, training the mixer becomes increasingly complex with more LoRA components, requiring additional parameters and longer convergence times.  To address these challenges, we adopt a single RoW LoRA, which aggregates knowledge from other clients into a single component. While this reduces computational and communication costs, it may limit personalization. Future work could explore clustering-based approaches to maintain a small number of RoW LoRA components, capturing finer-grained client relationships while remaining scalable.

\section{The Algorithm of FedALT}
We summarize the pseudocode of FedALT in Algorithms 1.



\begin{algorithm}[h]
\caption{FedALT}
\begin{algorithmic}[1]
\REQUIRE Pretrained model $W_0$, number of clients $K$, communication rounds $T$
\FOR{each client $k$}
    \STATE Initialize LoRA $(A_k^L, B_k^L)$ and mixer $G_k$ randomly
    \STATE Initialize RoW LoRA $(A_k^R, B_k^R) \gets 0$
\ENDFOR
\FOR{$t = 1$ to $T$}
    \STATE \textbf{Client-side training}:
    \FOR{each client $k$ \textbf{in parallel}}
        \STATE Receive RoW LoRA $(A_k^R, B_k^R)$ from server
        \FOR{each local epoch}
            \STATE Update $(A_k^L, B_k^L)$ and $G_k$ using gradient descent
            \STATE Compute output:
            \STATE $y = W_0 x + \alpha_k(x) B_k^L A_k^L x + (1 - \alpha_k(x)) B_k^R A_k^R x$
            \STATE where $\alpha_k(x) = \text{softmax}(G_k x)$
        \ENDFOR
        \STATE Upload $(A_k^L, B_k^L)$ to server
    \ENDFOR
    \STATE \textbf{Server-side aggregation}:
    \FOR{each client $k$}
        \STATE $A_k^R \gets \frac{1}{K-1} \sum_{m \ne k} A_m^L$
        \STATE $B_k^R \gets \frac{1}{K-1} \sum_{m \ne k} B_m^L$
        \STATE Send $(A_k^R, B_k^R)$ to client $k$
    \ENDFOR
\ENDFOR
\end{algorithmic}
\end{algorithm}

\section{Additional Experiment Results}
\label{appendix: experiment}

\subsection{Training Workload}
In \cref{tab:trainable_param}, we compare the training workload of our proposed FedALT with existing baselines using the Bloom-560M model. We report both the percentage of trainable parameters, the average local epoch training time and the inference time. FedIT serves as a representative baseline, as most federated methods share similar architecture in terms of parameter count and inference cost. Among all methods, FFA-LoRA has the fewest trainable parameters, as it updates only the LoRA \( \mathbf{B} \) matrices; however, this results in significantly lower performance. In contrast, FedALT delivers substantially better results while maintaining comparable training/inference time and parameter overhead, demonstrating that it achieves efficiency without sacrificing effectiveness.

To further ensure that FedALT’s performance gains are not simply due to having more parameters, we conduct two controlled comparisons, as shown in \cref{tab:trainable_vs_perf}. In \textit{RoW-Update (rank=8)}, both Individual and RoW LoRA modules (each with rank 8) are trained jointly with a dynamic mixer. In \textit{Fixed Weight (rank=16)}, only the Individual LoRA (rank 16) is trained, while the frozen RoW LoRA is combined with a fixed mixing weight \(\alpha = 0.5\). Although both configurations involve more trainable parameters than FedALT, they yield lower performance. These comparisons validate two core design choices in FedALT: the effectiveness of decoupled training and the benefit of dynamic mixing for personalized adaptation.

\begin{table}[h]
\centering
\begin{adjustbox}{width=1\linewidth}
\begin{tabular}{c|c|c|c}
\toprule
\textbf{Methods} & \textbf{Trainable Parameters} & \textbf{Avg Local Epoch Time} &  \textbf{Inference Time} \\
\midrule
FedIT      & 0.0622\% & 6.66 s & 8.89 s\\
FFA-LoRA   & 0.0311\% & 6.56 s & 8.77 s\\
\textbf{FedALT} & 0.0661\% & 7.05 s  & 8.94 s\\
\bottomrule
\end{tabular}
\end{adjustbox}
\caption{\textbf{Comparison of training workload.} FedALT uses a comparable number of LoRA parameters to other baselines.}
\label{tab:trainable_param}
\end{table}

\begin{table*}[h]
\centering
\begin{adjustbox}{width=0.5\linewidth}
\begin{tabular}{l|c|c}
\toprule
\textbf{Methods} & \textbf{Trainable Parameters} & \textbf{Avg Performance} \\
\midrule
RoW-Update (rank=8) & 0.1283\% & 65.75 \\
Fixed Weight (rank=16) & 0.1244\% & 64.59 \\
\textbf{FedALT} & \textbf{0.0661\%} & \textbf{67.55} \\
\bottomrule
\end{tabular}
\end{adjustbox}
\caption{\textbf{Comparison of trainable parameters and performance.} FedALT achieves the highest performance while using fewer parameters.}
\label{tab:trainable_vs_perf}
\end{table*}

\subsection{Effect of Data Heterogeneity}
To further evaluate the robustness of our method under varying degrees of client heterogeneity, we design experiments across three settings. Our main experimental configuration, where each client is assigned a unique task, is denoted as the \textit{High Heterogeneous} setting. We also include two alternative configurations. In the \textit{Low Heterogeneous} setting, 8 clients are divided between only two tasks—Sentiment Analysis and Natural Language Inference—with clients sharing a task drawing data from the same source, resulting in minimal heterogeneity. The \textit{Mild Heterogeneous} setting involves 8 clients assigned datasets spanning a total of 6 distinct NLP tasks. As shown in \cref{tab:performance_mild_llama,tab:performance_mild_bloom,tab:performance_least_llama_8clients,tab:performance_least_bloom_8clients}., FedALT consistently outperforms all baseline methods across all levels of heterogeneity, using both Bloom and LLaMA2 as pre-trained models. These results highlight the robustness and adaptability of FedALT in handling diverse and heterogeneous data distributions.  

\begin{table*}[h!]
\centering
\resizebox{0.85\textwidth}{!}{%
\begin{tabular}{l|cccccccc|c}
\toprule
\multirow{2}{*}{\textbf{Methods}} & \multicolumn{8}{c|}{\textbf{LLaMA2-7B}} & \multirow{2}{*}{\textbf{Average}} \\
\cline{2-9}
& Struct 1 & Struct 2 & Coref. & NLI 1 & NLI 2 & Para. & Text Cls. & Sentiment & \\
\midrule
Local Only  & \textbf{52.98} & 57.35 & 74.08 & 66.10 & 69.02 & 78.67 & 64.97 & 70.16 & 66.67 \\
FedIT       & 48.30 & 52.81 & 76.51 & 65.61 & 69.43 & 71.56 & 60.08 & 71.90 & 64.53 \\
FFA-LoRA    & 50.64 & 51.70 & 68.09 & 60.99 & 64.59 & 70.00 & 61.47 & 69.68 & 62.15 \\
FedSA       & 52.22 & 56.59 & 78.11 & 71.41 & 69.13 & 80.00 & 65.15 & 66.92 & 67.44 \\
FedDPA      & 51.99 & 55.62 & 78.60 & 66.82 & 68.49 & 78.67 & 66.09 & 73.60 & 67.49 \\
PF2LoRA       & 50.70 & 60.19 & 77.80 & 68.07 & 69.21 & 72.67 & \textbf{71.10} & 71.07 & 67.60 \\
FDLoRA       & 52.71 & 58.37 & 74.74 & 69.62 & 66.27 & 71.00 & 65.99 & \textbf{73.80} & 66.56 \\
\rowcolor{gray!40} \textbf{FedALT}      & 51.73 & \textbf{60.72} & \textbf{81.91} & \textbf{74.51} & \textbf{71.12} & \textbf{80.33} & 70.28 & 73.22 & \textbf{70.48} \\
\bottomrule
\end{tabular}%
}
\caption{\textbf{Performance on mild heterogeneous (LLaMA2-7B).} Evaluation across tasks in Table~\ref{tab:fl_tasks_datasets_2}.}
\label{tab:performance_mild_llama}
\end{table*}

\begin{table*}[h!]
\centering
\resizebox{0.85\textwidth}{!}{%
\begin{tabular}{l|cccccccc|c}
\toprule
\multirow{2}{*}{\textbf{Methods}} & \multicolumn{8}{c|}{\textbf{Bloom-560M}} & \multirow{2}{*}{\textbf{Average}} \\
\cline{2-9}
& Struct 1 & Struct 2 & Coref. & NLI 1 & NLI 2 & Para. & Text Cls. & Sentiment & \\
\midrule
Local       & 40.47 & 48.10 & 34.48 & 42.33 & 52.58 & 70.26 & 47.65 & 65.82 & 50.21 \\
FedIT       & 35.38 & 41.20 & 38.86 & 39.23 & 49.16 & \textbf{71.67} & 51.75 & 65.00 & 49.03 \\
FFA-LoRA    & 36.16 & 39.40 & 36.57 & 36.22 & 47.19 & 68.67 & 50.45 & 63.89 & 47.32 \\
FedSA       & 35.98 & 44.71 & 38.68 & 47.68 & 55.92 & 70.54 & 58.02 & 68.84 & 52.55 \\
FedDPA      & 38.56 & 42.54 & 39.23 & 46.69 & 49.21 & 69.00 & 62.35 & \textbf{68.92} & 52.06 \\
PF2LoRA     & 39.65 & 42.29 & \textbf{40.36} & 45.85 & 53.07 & 70.40 & 57.43 & 66.67 & 51.97 \\
FDLoRA      & 40.66 & 44.62 & 38.91 & 43.54 & 51.17 & 70.33 & 54.13 & 68.00 & 51.42 \\
\rowcolor{gray!25} \textbf{FedALT} & \textbf{41.10} & \textbf{48.64} & 39.80 & \textbf{48.14} & \textbf{56.63} & 71.27 & \textbf{64.84} & 68.67 & \textbf{54.89} \\
\bottomrule
\end{tabular}%
}
\caption{\textbf{Performance on mild heterogeneous (Bloom-560M).} Evaluation across tasks in Table~\ref{tab:fl_tasks_datasets_2}.}
\label{tab:performance_mild_bloom}
\end{table*}

\begin{table*}[h]
\centering
\resizebox{0.85\textwidth}{!}{%
\begin{tabular}{l|cccccccc|c}
\toprule
\textbf{Methods} & SA1 & SA2 & SA3 & SA4 & NLI1 & NLI2 & NLI3 & NLI4   & \textbf{Average} \\
\midrule
Local & 65.95 & 63.95 & 63.57 & 66.34 & 45.95 & 47.45 & 66.93 & 64.60 & 60.59 \\
FedIT & 67.25 & 66.82 & 65.85 & 64.79 & 46.53 & 49.48 & 68.53 & 68.65 & 62.24 \\
FFA-LoRA & 66.42 & 61.84 & 62.96 & 65.14 & 46.83 & 49.23 & 67.93 & 68.16 & 61.07 \\
FedSA & 71.85 & 64.31 & 65.45 & 69.37 & 46.31 & 48.24 & 67.03 & \textbf{70.38} & 62.87 \\
FedDPA & 67.39 & 64.74 & 65.58 & 63.98 & 46.89 & 48.63 & 69.77 & 69.47 & 62.06 \\
PF2LoRA & 71.73 & 67.83 & 65.45  & 70.18 & 47.19 & 47.68 & 69.41 & 69.03 & 63.56 \\
FDLoRA & 68.54 & 65.46 & \textbf{65.93} & 63.36 & 47.09 & 49.19 & \textbf{70.69} & 69.07 & 62.42 \\
\rowcolor{gray!25} \textbf{FedALT} & \textbf{73.05} & \textbf{67.86} & 65.91 & \textbf{72.82} & \textbf{47.29} & \textbf{50.01} & 70.42 & 69.88 & \textbf{64.66} \\
\bottomrule
\end{tabular}%
}
\caption{\textbf{Performance on low heterogeneous (LLaMA2-7B).} Evaluation across 8 clients split by task-dataset pairs. SA1–SA2 use \texttt{sst2}, SA3–SA4 use \texttt{sentiment140}, NLI1–NLI2 use \texttt{qnli}, and NLI3–NLI4 use \texttt{snli}.}
\label{tab:performance_least_llama_8clients}
\end{table*}

\begin{table*}[h!]
\centering
\resizebox{0.85\textwidth}{!}{%
\begin{tabular}{l|cccccccc|c}
\toprule
\textbf{Methods} & SA1 & SA2 & SA3 & SA4 & NLI1 & NLI2 & NLI3 & NLI4 & \textbf{Average} \\
\midrule
Local & 44.33 & 40.45 & 46.03 & 51.04 & 40.64 & 42.58 & 62.13 & 62.03 & 48.65 \\
FedIT       & 46.89 & 45.16 & 50.20 & 54.69 & 42.49 & \textbf{46.26} & 64.19 & 67.29 & 52.15 \\
FFA-LoRA    & 52.18 & 36.04 & 52.47 & 51.91  & 42.78 & 45.29 & 64.50 & 65.45 & 51.33 \\
FedSA       & 53.55 & 47.16 & 51.63 & 50.47 & \textbf{44.96} & 45.92 & 66.36 & 64.18 & 53.03 \\
FedDPA      & 52.61 & 47.27 & 52.03 & 53.14 & 43.26 & 45.56 & 65.68 & 65.38 & 53.12 \\
PF2LoRA     & 46.88 & 43.00 & \textbf{55.24} & 55.30 & 39.42 & 43.39 & 64.14 & 61.39 & 51.10 \\
FDLoRA      & 50.66 & 46.72 & 49.38 & 54.07 & 42.35 & 45.82 & 64.63 & 66.45 & 52.51 \\
\rowcolor{gray!40} \textbf{FedALT}      & \textbf{54.06} & \textbf{49.18} & 53.34 & \textbf{55.78}  & 42.94 & 45.51 & \textbf{67.57} & \textbf{67.49} & \textbf{54.48} \\
\bottomrule
\end{tabular}%
}
\caption{\textbf{Performance on low heterogeneous (Bloom-560M).} Each dataset is split across two clients. See Table~\ref{tab:fl_tasks_datasets_3} for dataset mapping.}
\label{tab:performance_least_bloom_8clients}
\end{table*}

\section{Convergence Analysis}
\label{appendix: convergence}
\textcolor{black}{Since our mixer depends on each sample input, existing convergence analysis frameworks cannot be directly applied. In this section, we provide an initial convergence analysis of FedALT under appropriate assumptions. We begin by defining the key parameters and establishing necessary conditions. While this analysis offers theoretical validation for our approach, we acknowledge that it represents a preliminary investigation into the convergence properties of FedALT. More comprehensive theoretical frameworks with potentially stronger guarantees and less restrictive assumptions remain an important direction for future research. Nevertheless, our initial analysis provides valuable insights into the convergence behavior and offers a theoretical foundation that complements our empirical findings.}

\subsection{Notation and Setup}

Let the trainable parameters for client $k$ at round $t$ be denoted by
\begin{equation}
Z_k^t = \{\mathbf{A}_k^L, \mathbf{B}_k^L, \mathbf{G}_k\}
\end{equation}

where $\mathbf{A}_k^L$ and $\mathbf{B}_k^L$ represent the Individual LoRA modules and $\mathbf{G}_k$ denotes the adaptive mixer.

The Rest-of-World (RoW) LoRA for client $k$ at round $t$ is computed as:
\begin{equation}
\mathbf{A}_k^R = \frac{1}{K-1}\sum_{m\neq k}\mathbf{A}_m^L, \quad \mathbf{B}_k^R = \frac{1}{K-1}\sum_{m\neq k}\mathbf{B}_m^L
\end{equation}

We denote the loss function for client $k$ as $L_k(Z_k|\mathbf{A}_k^R, \mathbf{B}_k^R)$, which measures the performance on client $k$'s local dataset given its trainable parameters and the fixed RoW LoRA components.

\subsection{Assumptions}

\begin{assumption}[Smoothness]
For each client $k$, the loss function $L_k$ is $\mu$-smooth with respect to $Z_k$, i.e., for any $Z_k^1, Z_k^2$:
\begin{equation}
\|\nabla L_k(Z_k^1) - \nabla L_k(Z_k^2)\| \leq \mu \|Z_k^1 - Z_k^2\|
\end{equation}
\end{assumption}

\begin{assumption}[Strong Convexity]
For each client $k$, the loss function $L_k$ is $\lambda$-strongly convex with respect to $Z_k$, i.e., for any $Z_k^1, Z_k^2$:
\begin{equation}
L_k(Z_k^2) \geq L_k(Z_k^1) + \langle \nabla L_k(Z_k^1), Z_k^2 - Z_k^1 \rangle + \frac{\lambda}{2}\|Z_k^2 - Z_k^1\|^2
\end{equation}
\end{assumption}

\begin{assumption}[Bounded RoW Sensitivity]
The optimal parameters for client $k$ are sensitive to changes in the RoW parameters with a bounded Lipschitz constant. Specifically, for any two sets of RoW parameters $(\mathbf{A}_k^{R,1}, \mathbf{B}_k^{R,1})$ and $(\mathbf{A}_k^{R,2}, \mathbf{B}_k^{R,2})$:
\begin{align}
& \|Z_k^*(\mathbf{A}_k^{R,1}, \mathbf{B}_k^{R,1}) - Z_k^*(\mathbf{A}_k^{R,2}, \mathbf{B}_k^{R,2})\| \nonumber \\
& \leq  \beta \|(\mathbf{A}_k^{R,1}, \mathbf{B}_k^{R,1}) - (\mathbf{A}_k^{R,2}, \mathbf{B}_k^{R,2})\|
\end{align}
where $Z_k^*(\mathbf{A}_k^R, \mathbf{B}_k^R) = \arg\min_{Z_k} L_k(Z_k|\mathbf{A}_k^R, \mathbf{B}_k^R)$ represents the optimal parameters for client $k$ given fixed RoW parameters.
\end{assumption}

The first two assumptions are standard and widely used in FL proof. The third assumption holds because the RoW LoRA component remains frozen during local training, providing a stable reference point that limits how much client parameters can drift between rounds based on changes in other clients' parameters.

\subsection{Convergence Theorem}

\begin{theorem}[Convergence of FedALT]
With the assumptions, we can derive:
\[
\| Z_k^{t+1} - Z_k^t \| \leq \frac{2\epsilon}{1 - \beta} + \beta^t \max_k \| Z_k^1 - Z_k^0 \|
\]

If $\beta < 1$ and the local training at each round converges to a neighborhood of the optimal solution, then the sequence of trainable parameters $\{Z_k^t\}$ generated by FedALT converges to a stable point for each client $k$.
\end{theorem}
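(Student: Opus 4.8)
\section{Proof Proposal}

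The plan is to view each communication round as an approximate fixed-point iteration in the joint parameter space and to show it behaves as a contraction composed with a bounded perturbation. Write $R_k^t=(\mathbf{A}_k^{R,t},\mathbf{B}_k^{R,t})$ for the RoW parameters that client $k$ holds fixed \emph{while training during round} $t$, and recall that the round-$t$ local solve is assumed to return a point with $\|Z_k^{t+1}-Z_k^*(R_k^t)\|\le\epsilon$, where $Z_k^*(\cdot)=\arg\min_{Z_k}L_k(Z_k\mid\cdot)$ is the unique minimizer (uniqueness follows from $\lambda$-strong convexity, and $\mu$-smoothness guarantees that a standard first-order method can reach this $\epsilon$-accuracy). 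Assumption~3 then supplies exactly the Lipschitz control $\|Z_k^*(R_k^t)-Z_k^*(R_k^{t-1})\|\le\beta\,\|R_k^t-R_k^{t-1}\|$ that makes the iteration contractive whenever $\beta<1$.

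First I would derive the one-step recursion. By the triangle inequality,
\[
\begin{aligned}
\|Z_k^{t+1}-Z_k^t\| &\le \|Z_k^{t+1}-Z_k^*(R_k^t)\| + \|Z_k^*(R_k^t)-Z_k^*(R_k^{t-1})\| \\
&\quad + \|Z_k^*(R_k^{t-1})-Z_k^t\|,
\end{aligned}
\]
where the first and third terms are each at most $\epsilon$ (the outputs of the round-$t$ and round-$(t-1)$ local solves) and the middle term is at most $\beta\,\|R_k^t-R_k^{t-1}\|$ by Assumption~3. Next I would bound $\|R_k^t-R_k^{t-1}\|$: since $R_k^t=\frac{1}{K-1}\sum_{m\neq k}(\mathbf{A}_m^{L,t},\mathbf{B}_m^{L,t})$ and the LoRA block is a sub-vector of $Z_m^t$, the triangle inequality for the average gives $\|R_k^t-R_k^{t-1}\|\le\frac{1}{K-1}\sum_{m\neq k}\|Z_m^t-Z_m^{t-1}\|\le\max_m\|Z_m^t-Z_m^{t-1}\|$. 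Setting $D_t:=\max_k\|Z_k^t-Z_k^{t-1}\|$ and taking the maximum over $k$ yields the scalar recursion $D_{t+1}\le 2\epsilon+\beta D_t$.

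Then I would unroll this recursion: $D_{t+1}\le 2\epsilon\sum_{j=0}^{t-1}\beta^{j}+\beta^t D_1\le\frac{2\epsilon}{1-\beta}+\beta^t D_1$ when $\beta<1$, and since $D_1=\max_k\|Z_k^1-Z_k^0\|$ and $\|Z_k^{t+1}-Z_k^t\|\le D_{t+1}$, this is precisely the claimed inequality. For the convergence statement I would observe that the transient $\beta^t D_1$ decays geometrically, so for large $t$ the increments are confined to a ball of radius $2\epsilon/(1-\beta)$; if the per-round accuracy $\epsilon$ is driven to zero (or made summable across rounds), the increments become summable, $\{Z_k^t\}$ is Cauchy, and it converges to a limit $Z_k^\infty$ that is stable in the sense $Z_k^\infty=Z_k^*(R_k^\infty)$.

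The main obstacle I anticipate is index bookkeeping rather than a hard estimate: one must carefully separate the RoW parameters used \emph{during} round $t$ from those broadcast \emph{after} round $t$, and attribute the $\epsilon$-accuracy to the correct local solve so that both $\|Z_k^{t+1}-Z_k^*(R_k^t)\|$ and $\|Z_k^t-Z_k^*(R_k^{t-1})\|$ are genuinely at most $\epsilon$. A secondary subtlety is that the increment bound alone does not force convergence --- bounded but non-summable increments need not converge --- so the final conclusion should be stated either as convergence to an $O(\epsilon/(1-\beta))$-neighborhood or be upgraded via an explicit $\epsilon\to0$ schedule; I would make that dependence explicit rather than leave ``stable point'' informal.
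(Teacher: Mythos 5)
Your proposal follows essentially the same route as the paper: the same triangle-inequality decomposition through $Z_k^*(R_k^t)$ and $Z_k^*(R_k^{t-1})$, the same use of Assumption~3 to get the $\beta\|R_k^t-R_k^{t-1}\|$ term, the same averaging bound on the RoW difference, and the same scalar recursion $\Delta^t\le 2\epsilon+\beta\Delta^{t-1}$ unrolled to the stated inequality (the paper additionally spells out the standard strongly-convex/smooth gradient-descent argument establishing the $\epsilon$-accuracy of each local solve, which you justifiably defer to standard theory since the theorem assumes it). Your closing caveat --- that bounded increments alone do not force convergence and one needs $\epsilon\to 0$ or a summability condition to get a Cauchy sequence --- is a fair observation that the paper's own proof also glosses over when it concludes convergence to a ``stable point.''
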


\begin{proof}
We analyze the convergence in two steps:
\begin{enumerate}
    \item Local convergence within each round
    \item Global convergence across rounds
\end{enumerate}

\textbf{Step 1: Local convergence within each round}

At round $t$, client $k$ optimizes its parameters $Z_k$ with fixed RoW parameters $(\mathbf{A}_k^{R,t}, \mathbf{B}_k^{R,t})$. Let $Z_k^{t,s}$ represent the parameters after $s$ steps of local training. We now analyze the convergence of this local optimization process using Assumptions 1 and 2.

The gradient descent update rule at step $s$ is given by:
\begin{equation}
Z_k^{t,s+1} = Z_k^{t,s} - \eta \nabla L_k(Z_k^{t,s}|\mathbf{A}_k^{R,t}, \mathbf{B}_k^{R,t})
\end{equation}

From Assumption 1 (smoothness), we can have:
\begin{align}
&L_k(Z_k^{t,s+1}|\mathbf{A}_k^{R,t}, \mathbf{B}_k^{R,t}) \nonumber\\
&\leq L_k(Z_k^{t,s}|\mathbf{A}_k^{R,t}, \mathbf{B}_k^{R,t}) \nonumber\\
&\quad + \langle \nabla L_k(Z_k^{t,s}|\mathbf{A}_k^{R,t}, \mathbf{B}_k^{R,t}), Z_k^{t,s+1} - Z_k^{t,s} \rangle \nonumber\\
&\quad + \frac{\mu}{2}\|Z_k^{t,s+1} - Z_k^{t,s}\|^2
\end{align}

Substituting the update rule:
\begin{align}
& L_k(Z_k^{t,s+1}|\mathbf{A}_k^{R,t}, \mathbf{B}_k^{R,t}) \nonumber\\
&\leq L_k(Z_k^{t,s}|\mathbf{A}_k^{R,t}, \mathbf{B}_k^{R,t}) \nonumber\\
&\quad + \langle \nabla L_k(Z_k^{t,s}|\mathbf{A}_k^{R,t}, \mathbf{B}_k^{R,t}), -\eta \nabla L_k(Z_k^{t,s}|\mathbf{A}_k^{R,t}, \mathbf{B}_k^{R,t}) \rangle \nonumber\\
&\quad + \frac{\mu}{2}\|-\eta \nabla L_k(Z_k^{t,s}|\mathbf{A}_k^{R,t}, \mathbf{B}_k^{R,t})\|^2 \nonumber\\
&= L_k(Z_k^{t,s}|\mathbf{A}_k^{R,t}, \mathbf{B}_k^{R,t}) - \eta \|\nabla L_k(Z_k^{t,s}|\mathbf{A}_k^{R,t}, \mathbf{B}_k^{R,t})\|^2 \nonumber\\
&\quad + \frac{\mu\eta^2}{2}\|\nabla L_k(Z_k^{t,s}|\mathbf{A}_k^{R,t}, \mathbf{B}_k^{R,t})\|^2 \nonumber\\
&= L_k(Z_k^{t,s}|\mathbf{A}_k^{R,t}, \mathbf{B}_k^{R,t})\nonumber\\
&\quad - \eta(1 - \frac{\mu\eta}{2})\|\nabla L_k(Z_k^{t,s}|\mathbf{A}_k^{R,t}, \mathbf{B}_k^{R,t})\|^2
\end{align}

Let $Z_k^* = Z_k^*(\mathbf{A}_k^{R,t}, \mathbf{B}_k^{R,t})$ denote the optimal parameters that minimize the loss function for client $k$ given the fixed RoW parameters. 

From Assumption 2 (strong convexity), we have:
\begin{align}
&L_k(Z_k^*|\mathbf{A}_k^{R,t}, \mathbf{B}_k^{R,t})  \nonumber\\
&\geq L_k(Z_k^{t,s}|\mathbf{A}_k^{R,t}, \mathbf{B}_k^{R,t}) 
\nonumber\\
&\quad + \langle \nabla L_k(Z_k^{t,s}|\mathbf{A}_k^{R,t}, \mathbf{B}_k^{R,t}), Z_k^* - Z_k^{t,s} \rangle \nonumber\\
&\quad + \frac{\lambda}{2}\|Z_k^* - Z_k^{t,s}\|^2
\end{align}

Rearranging:
\begin{align}
& \langle \nabla L_k(Z_k^{t,s}|\mathbf{A}_k^{R,t}, \mathbf{B}_k^{R,t}), Z_k^{t,s} - Z_k^* \rangle \nonumber\\
& \geq  L_k(Z_k^{t,s}|\mathbf{A}_k^{R,t}, \mathbf{B}_k^{R,t})  - L_k(Z_k^*|\mathbf{A}_k^{R,t}, \mathbf{B}_k^{R,t}) \nonumber\\
& \quad+ \frac{\lambda}{2}\|Z_k^* - Z_k^{t,s}\|^2
\end{align}

By Cauchy-Schwarz inequality:
\begin{align}
& \|\nabla L_k(Z_k^{t,s}|\mathbf{A}_k^{R,t}, \mathbf{B}_k^{R,t})\| \cdot \|Z_k^{t,s} - Z_k^*\| \nonumber \\
& \geq  \langle \nabla L_k(Z_k^{t,s}|\mathbf{A}_k^{R,t}, \mathbf{B}_k^{R,t}), Z_k^{t,s} - Z_k^* \rangle
\end{align}

Combining with the previous inequality:
\begin{align}
&\|\nabla L_k(Z_k^{t,s}|\mathbf{A}_k^{R,t}, \mathbf{B}_k^{R,t})\| \cdot \|Z_k^{t,s} - Z_k^*\| \nonumber \\
& \geq  L_k(Z_k^{t,s}|\mathbf{A}_k^{R,t}, \mathbf{B}_k^{R,t}) - L_k(Z_k^*|\mathbf{A}_k^{R,t}, \mathbf{B}_k^{R,t}) \nonumber \\
&\quad+ \frac{\lambda}{2}\|Z_k^* - Z_k^{t,s}\|^2
\end{align}

From the properties of strongly convex functions, we can derive:
\begin{align}
& \|\nabla L_k(Z_k^{t,s}|\mathbf{A}_k^{R,t}, \mathbf{B}_k^{R,t})\|^2 \nonumber\\
&\geq 2\lambda(L_k(Z_k^{t,s}|\mathbf{A}_k^{R,t}, \mathbf{B}_k^{R,t}) - L_k(Z_k^*|\mathbf{A}_k^{R,t}, \mathbf{B}_k^{R,t}))
\end{align}

Substituting this into our earlier expression for the loss after one update:
\begin{align}
&L_k(Z_k^{t,s+1}|\mathbf{A}_k^{R,t}, \mathbf{B}_k^{R,t}) - L_k(Z_k^*|\mathbf{A}_k^{R,t}, \mathbf{B}_k^{R,t}) \nonumber\\
&\leq L_k(Z_k^{t,s}|\mathbf{A}_k^{R,t}, \mathbf{B}_k^{R,t}) - L_k(Z_k^*|\mathbf{A}_k^{R,t}, \mathbf{B}_k^{R,t}) \nonumber \\
&\quad- \eta(1 - \frac{\mu\eta}{2})\|\nabla L_k(Z_k^{t,s}|\mathbf{A}_k^{R,t}, \mathbf{B}_k^{R,t})\|^2\\
&\leq L_k(Z_k^{t,s}|\mathbf{A}_k^{R,t}, \mathbf{B}_k^{R,t}) - L_k(Z_k^*|\mathbf{A}_k^{R,t}, \mathbf{B}_k^{R,t}) \nonumber\\
&\quad - 2\eta(1 - \frac{\mu\eta}{2})\lambda(L_k(Z_k^{t,s}|\mathbf{A}_k^{R,t}, \mathbf{B}_k^{R,t}) \nonumber\\
&\quad - L_k(Z_k^*|\mathbf{A}_k^{R,t}, \mathbf{B}_k^{R,t}))\\
& = (1 - 2\eta\lambda(1 - \frac{\mu\eta}{2}))(L_k(Z_k^{t,s}|\mathbf{A}_k^{R,t}, \mathbf{B}_k^{R,t}) \nonumber\\
&\quad - L_k(Z_k^*|\mathbf{A}_k^{R,t}, \mathbf{B}_k^{R,t})),
\end{align}
where the learning rate $\eta <\frac{2}{\mu}$.

From Assumption 2 (strong convexity), we have:
\begin{align}
&\frac{\lambda}{2}\|Z_k^{t,s} - Z_k^*\|^2 \nonumber\\
&\leq L_k(Z_k^{t,s}|\mathbf{A}_k^{R,t}, \mathbf{B}_k^{R,t}) - L_k(Z_k^*|\mathbf{A}_k^{R,t}, \mathbf{B}_k^{R,t})
\end{align}

And from Assumption 1 (smoothness):
\begin{align}
&L_k(Z_k^{t,s}|\mathbf{A}_k^{R,t}, \mathbf{B}_k^{R,t}) - L_k(Z_k^*|\mathbf{A}_k^{R,t}, \mathbf{B}_k^{R,t}) \nonumber\\
&\leq \frac{\mu}{2}\|Z_k^{t,s} - Z_k^*\|^2
\end{align}

Combining these inequalities with our convergence rate:
\begin{align}
&\frac{\lambda}{2}\|Z_k^{t,s+1} - Z_k^*\|^2 \nonumber\\
&\leq  L_k(Z_k^{t,s+1}|\mathbf{A}_k^{R,t}, \mathbf{B}_k^{R,t}) - L_k(Z_k^*|\mathbf{A}_k^{R,t}, \mathbf{B}_k^{R,t})\\
&\leq (1 - 2\eta\lambda(1 - \frac{\mu\eta}{2}))(L_k(Z_k^{t,s}|\mathbf{A}_k^{R,t}, \mathbf{B}_k^{R,t}) \nonumber\\
& \quad- L_k(Z_k^*|\mathbf{A}_k^{R,t}, \mathbf{B}_k^{R,t}))\\
&\leq (1 - 2\eta\lambda(1 - \frac{\mu\eta}{2}))\frac{\mu}{2}\|Z_k^{t,s} - Z_k^*\|^2
\end{align}

Simplifying:
\begin{equation}
\|Z_k^{t,s+1} - Z_k^*\|^2 \leq (1 - 2\eta\lambda(1 - \frac{\mu\eta}{2}))\frac{\mu}{\lambda}\|Z_k^{t,s} - Z_k^*\|^2
\end{equation}

Denote $\rho = (1 - 2\eta\lambda(1 - \frac{\mu\eta}{2}))\frac{\mu}{\lambda}$, when we set the learning rate be $\frac{1}{\mu}$, we can make $\rho$ in the range of 0 to 1.

By applying this inequality recursively for $s$ steps:
\begin{equation}
\|Z_k^{t,s} - Z_k^*\|^2 \leq \rho^s \|Z_k^{t,0} - Z_k^*\|^2
\end{equation}

Taking the square root:
\begin{equation}
\|Z_k^{t,s} - Z_k^*(\mathbf{A}_k^{R,t}, \mathbf{B}_k^{R,t})\| \leq \rho^{s/2} \|Z_k^{t,0} - Z_k^*(\mathbf{A}_k^{R,t}, \mathbf{B}_k^{R,t})\|
\end{equation}

Since $\rho < 1$, we have $\lim_{s \to \infty} \rho^s = 0$. Therefore, with a sufficient number of local training steps $S$, we can ensure:
\begin{equation}
\|Z_k^{t+1} - Z_k^*(\mathbf{A}_k^{R,t}, \mathbf{B}_k^{R,t})\| \leq \epsilon
\end{equation}

where $Z_k^{t+1} = Z_k^{t,S}$ and $\epsilon$ can be made arbitrarily small by increasing $S$.

\textbf{Step 2: Global convergence across rounds}

Consider the difference between parameters at consecutive rounds:
\begin{align}
& \|Z_k^{t+1} - Z_k^{t}\| \nonumber\\
& \leq \|Z_k^{t+1} - Z_k^*(\mathbf{A}_k^{R,t}, \mathbf{B}_k^{R,t})\| \nonumber\\
&\quad+ \|Z_k^*(\mathbf{A}_k^{R,t}, \mathbf{B}_k^{R,t}) - Z_k^*(\mathbf{A}_k^{R,t-1}, \mathbf{B}_k^{R,t-1})\| \nonumber\\
&\quad+ \|Z_k^*(\mathbf{A}_k^{R,t-1}, \mathbf{B}_k^{R,t-1}) - Z_k^{t}\|
\end{align}

By Step 1 and Assumption 3:
\begin{equation}
\|Z_k^{t+1} - Z_k^{t}\| \leq 2\epsilon + \beta \|(\mathbf{A}_k^{R,t}, \mathbf{B}_k^{R,t}) - (\mathbf{A}_k^{R,t-1}, \mathbf{B}_k^{R,t-1})\|
\end{equation}

From the definition of RoW LoRA:
\begin{align}
&\|(\mathbf{A}_k^{R,t}, \mathbf{B}_k^{R,t}) - (\mathbf{A}_k^{R,t-1}, \mathbf{B}_k^{R,t-1})\| \nonumber\\
&= \frac{1}{K-1}\|\sum_{m\neq k}(Z_m^t - Z_m^{t-1})\| \\
&\leq \frac{1}{K-1}\sum_{m\neq k}\|Z_m^t - Z_m^{t-1}\|
\end{align}

Therefore:
\begin{equation}
\|Z_k^{t+1} - Z_k^{t}\| \leq 2\epsilon + \frac{\beta}{K-1}\sum_{m\neq k}\|Z_m^t - Z_m^{t-1}\|
\end{equation}

Let $\Delta^t = \max_k \|Z_k^{t+1} - Z_k^{t}\|$. Then:
\begin{equation}
\|Z_k^{t+1} - Z_k^{t}\| \leq 2\epsilon + \frac{\beta(K-1)}{K-1}\Delta^{t-1} = 2\epsilon + \beta\Delta^{t-1}
\end{equation}

Taking the maximum over all clients:
\begin{equation}
\Delta^t \leq 2\epsilon + \beta\Delta^{t-1}
\end{equation}

When $\beta < 1$, this is a contraction, and by iterating:
\begin{equation}
\Delta^t \leq 2\epsilon\sum_{i=0}^{t-1}\beta^i + \beta^t\Delta^0 \leq \frac{2\epsilon}{1-\beta} + \beta^t\Delta^0
\end{equation}

As $t \rightarrow \infty$, $\Delta^t \rightarrow \frac{2\epsilon}{1-\beta}$, which can be made arbitrarily small by reducing $\epsilon$ (increasing local training iterations).


When this condition is satisfied, FedALT converges to a stable solution within a neighborhood determined by the local training precision.

\end{proof}

\section{Limitations of FedALT in Traditional FL}
\label{appendix: limit}
In FedALT, a separate RoW LoRA is maintained alongside each client's Individual LoRA component. The RoW LoRA is not trained during each round but remains fixed, which slightly increases the forward/inference workload. However, we emphasize that FedALT is particularly well-suited for federated LoRA fine-tuning. Compared to the pre-trained model (e.g., LLaMA), the additional forward computation introduced by the RoW LoRA is minimal and can be largely neglected.

However, applying the Individual and RoW structure to traditional FL, where the full model is updated rather than a lightweight adapter, may not be practical. In such a setting, maintaining an additional RoW model for inference would effectively double the forward computation workload, making it computationally inefficient. Therefore, our method is specifically designed for federated fine-tuning scenarios, where the overhead remains manageable and the benefits of interference isolation are maximized.

\end{document}